\patchcmd{\@citex}{,}{,$\!$}{}{}
\newtheorem{example}{Example}
\newtheorem{theorem}{Theorem}
\newtheorem{lemma}[theorem]{Lemma}
\newtheorem{proposition}[theorem]{Proposition}
\newtheorem{corollary}[theorem]{Corollary}
\newtheorem{definition}{Definition}
\newtheorem{observation}{Observation}
\newcommand{\MC}[1]{\mathcal{#1}}
\newcommand{\Mod}[1]{\llbracket#1\rrbracket}
\newcommand{\I}{\omega}
\newcommand{\rel}[1]{\prec_{#1}}
\newcommand{\releq}[1]{\preceq_{#1}}
\newcommand{\relc}[1]{\prec^\circ_{#1}}
\newcommand{\releqc}[1]{\preceq^\circ_{#1}}
\newcommand{\K}{\MC{K}}
\newcommand{\G}{\Gamma}
\newcommand{\relK}{\rel{\K}}
\newcommand{\releqK}{\releq{\K}}
\newcommand{\relcK}{\relc{\K}}
\newcommand{\releqcK}{\releqc{\K}}
\newcommand{\abst}{{\scaleobj{0.75}{(.)}}}
\renewcommand{\land}{\hspace{1pt}{\scaleobj{0.8}{\wedge}}\hspace{1pt}}
\newcommand{\mcirc}{\hspace{1pt}{\scaleobj{0.9}{\circ}}\hspace{1pt}}
\newcommand{\uncovered}{critical loop}
\def\sqsubsetneq{\mathrel{\sqsubseteq\kern-0.92em\raise-0.15em\hbox{\rotatebox{313}{\scalebox{1.1}[0.75]{\(\shortmid\)}}}\scalebox{0.3}[1]{\ }}}
\renewcommand{\models}{\mathrel{\raisebox{-1pt}{$\vDash$}}}
\newcommand{\basechange}{base change}
\title{A General Katsuno-Mendelzon-Style Characterization of\\ 
	AGM Belief Base Revision for Arbitrary Monotonic Logics}
\author{%
	Faiq Miftakhul Falakh$^1$\and
	Sebastian Rudolph$^1$\and
	Kai Sauerwald$^2$ \\
	\affiliations
	$^1$Computational Logic Group, TU Dresden, Germany\\
	$^2$Knowledge Based Systems Group, FernUniversit\"{a}t in Hagen, Germany\\
	\emails
	\{faiq\_miftakhul.falakh, sebastian.rudolph\}@tu-dresden.de,
	kai.sauerwald@fernuni-hagen.de\\[3ex] 
{\begin{flushleft}
\textbf{Important note:}\hspace{-1pt} This article constitutes a preliminary report, which was found to contain inaccuracies. It is\hspace{-0.2pt} superseded by\hspace{-0.2pt} a significantly\hspace{-0.2pt} generalized,\hspace{-1pt} extended,\hspace{-1pt} and revised treatise by\hspace{-0.2pt} the same authors made avai- lable on arXiv.org under the title \emph{Semantic Characterizations of General Belief Base Revision} on the 27th of December 2021 via \url{https://arxiv.org/abs/2112.13557}.
\end{flushleft}
}
}
\begin{document}

\maketitle

\begin{abstract}

The AGM postulates by Alchourr\'{o}n, G\"{a}rdenfors, and Makinson continue to represent a cornerstone in research related to belief change.
We generalize the approach of Katsuno and Mendelzon (KM) for characterizing AGM base revision from propositional logic to the setting of (multiple) base revision in arbitrary monotonic logics. 
Our core result is a representation theorem using the assignment of total -- yet not transitive -- ``preference'' relations to belief bases.
We also provide a characterization of all logics for which our result can be strengthened to preorder assignments (as in KM's original work).

 \end{abstract}

\section{Introduction}\label{sec:introduction}
The question how a rational agent should change her beliefs in the light of new information is crucial to AI systems. It gave rise to the area of \emph{belief change}, which has been massively influenced by the AGM paradigm of Alchourr\'{o}n, G\"{a}rdenfors, and Makinson \shortcite{agm_1985}.
The AGM theory assumes that an agent's beliefs are represented by a deductively closed set of formulas (aka belief set). A change operator for belief sets is required to satisfy appropriate postulates in order to qualify as a rational change operator. 
While the contribution of AGM is widely accepted as solid and inspiring foundation, it lacks support for certain relevant aspects: 
it provides no immediate solution on how to deal with multiple inputs (i.e., several formulae instead of just one), with \textsl{bases} (i.e., arbitrary finite collections of formulae, not necessarily deductively closed), or with the problem of iterated belief changes. 

While the AGM paradigm is axiomatic, much of its success originated from operationalisations via representation theorems. Yet, most existing characterisations of AGM revision require the underlying logic to fulfil the AGM assumptions, including compactness, closure under standard connectives, deduction, and supra-classicality \cite{KS_RibeiroWassermannFlourisAntoniou2013}. %

Leaving the safe grounds of these assumptions complicates matters;
representation theorems do not easily generalize to arbitrary monotonic logics. 
This has sparked investigations into tailored characterisations of AGM belief change for specific logics, such as Horn logic \cite{KS_DelgrandePeppas2015}, temporal logics \cite{KS_Bonanno2007}, action logics \cite{KS_ShapiroPagnuccoLesperanceLevesque2011}, first-order logic \cite{KS_ZhuangWangWangDelgrande2019}, and description logics \cite{qi_knowledge_2006,halaschek-wiener_2006,dong_2017}. 
More general approaches to revision in non-classical logics were given by Ribeiro, Wassermann et al. \cite{KS_RibeiroWassermannFlourisAntoniou2013,KS_Ribeiro2013,KS_RibeiroWassermann2014}, Delgrande et al.~\cite{del_2018}, Pardo et al.~\cite{KS_PardoDellundeGodo2009}, or Aiguier et al.~\cite{aiguier_2018}. 

In this paper, we consider (multiple) revision of finite bases in arbitrary monotonic logics, refining and generalizing the popular approach by Katsuno and Mendelzon {\cite{kat_1991}} (KM) for propositional belief base revision. 
KM start out from finite belief bases, assigning to each a total preorder on the interpretations, which expresses -- intuitively speaking -- a degree of ``modelishness''. The models of the result of any AGM revision will then coincide with the preferred (i.e., preorder-minimal) models of the received information.

Our approach generalises this idea of preferences over interpretations to the general setting, which necessitates adjusting the nature of the ``modelishness-indicating'' assignments: transitivity needs to be waived, whereas certain natural requirements regarding minimality need to be imposed.

The main contributions of this paper are the following:

\smallskip

$\left.\right.$\hspace{-5ex}\begin{minipage}[t]{0.5\textwidth}
\begin{itemize}\setlength{\itemsep}{0pt}%
		\setlength{\parindent}{0ex}%
		\item We extend KM's semantic approach from the setting of singular revision in propositional logic to  multiple revision of finite bases in arbitrary monotone logics.
		\item For this setting, we provide a representation theorem characterizing AGM belief change operators via assignments.
		\item We characterize those logics for which every AGM operator can even be captured by preorder assignments (i.e., in the classical KM way). In particular, this condition applies to all logics supporting disjunction over sentences.   
\end{itemize}
\end{minipage}

\section{Preliminaries}\label{sec:prelim}
We consider arbitrary logics $\mathbb{L}$ with monotonic model-theoretic semantics.
Syntactically, such logics are described by a (possibly infinite) set $\MC{L}$ of \emph{sentences}.
A \emph{belief base} $\K$ is then a finite\footnote{The term \textsl{base} is sometimes also used for arbitrary sets \cite{KS_FermeHansson2018}. We follow the mainstream in computer science and assume finite bases.} subset of $\MC{L}$, that is $\K \in \mathcal{P}_\mathrm{fin}(\MC{L})$.
Unlike in other belief revision frameworks, we impose no further requirements on $\MC{L}$ (such as closure under certain operators).

A model theory for $\mathbb{L}$ is defined in the classical way through a (potentially infinite) class ${\Omega}$ of \emph{interpretations} (also called \emph{worlds}) and a binary relation $\models$ between ${\Omega}$ and $\MC{L}$ where $\I \models \varphi$ indicates that $\I$ is a model of $\varphi$. Hence, a logic $\mathbb{L}$ is specified by the triple $(\MC{L},\Omega,\models)$. 
We let $\Mod{\varphi} = \{\I\in {\Omega} \mid \I \models \varphi\}$ denote the set of all models of $\varphi \in \MC{L}$ and obtain the models of a belief base $\K$ via $\Mod{\K} = \bigcap_{\varphi \in \K} \Mod{\varphi}$. A sentence or belief base is \emph{consistent} if it has a model and \emph{inconsistent} otherwise. 
Logical entailment is defined as usual (overloading the symbol "$\models$") via models: for two belief bases $\K$ and $\K'$ we say $\K$ \emph{entails} $\K'$ (written $\K \models \K'$) if $\Mod{\K} \subseteq \Mod{\K'}$. Note that this definition of the semantics enforces that $\mathbb{L}$ is monotonic.\footnote{From here on, when simply speaking of ``logic'', we always assume the classical, monotonic setting described here. From now on, we also assume a logic $\mathbb{L} = (\MC{L},\Omega,\models)$ as given and fixed.}
As usual we write $\K \equiv \K'$ to express 
$\Mod{\K}=\Mod{\K'}$.
A \emph{multiple \basechange{} operator} for $\mathbb{L}$ is a function $\circ: \mathcal{P}_\mathrm{fin}(\MC{L}) \times \mathcal{P}_\mathrm{fin}(\MC{L}) \to \mathcal{P}_\mathrm{fin}(\MC{L})$. 
For convenience, we henceforth drop ``multiple'' and simply speak of \basechange\ operators instead.

We will endow the interpretation space ${\Omega}$ with some structure. 
A binary relation $\preceq$ over ${\Omega}$ is \emph{total} if, for any $\I_1,\I_2 \in {\Omega}$, at least one of  $\I_1\preceq\I_2$ or $\I_2\preceq\I_1$ holds. We write $\I_1\prec\I_2$ for $\I_1 \preceq \I_2$ and $\I_2 \not\preceq \I_1$.
For $\Omega' \subseteq {\Omega}$, %
$\I \in \Omega'$ is called \emph{$\preceq$-minimal in $\Omega'$} if $\I \preceq \I'$
for all $\I'\in\Omega'$.\footnote{If $\preceq$ is total, this definition 
is equivalent to the \emph{absence} of any $\I'' \in \Omega'$ with $\I'' \prec \I$.}
We let $\min(\Omega',\preceq)$ denote the set of $\preceq$-minimal interpretations in $\Omega'$. We call $\preceq$ a \emph{preorder}, if it is transitive and reflexive.

\section{Base Revision in Propositional Logic}\label{sec:km}

A well-known and by now popular %
characterization of base revision has been described by Katsuno and Mendelzon {\cite{kat_1991}} for the special case of propositional logic. %
KM's approach hinges on several properties of propositional logics. 
To start with, any propositional belief base $\K$ can be written as a single propositional formula $\bigwedge_{\alpha\in \K}\alpha$.   
Consequently, in their approach, belief bases are represented by single formulas. They provide the following set of postulates, derived from the AGM revision postulates,  where $ \varphi,\varphi_1,\varphi_2,\alpha$, and $\beta $ are propositional formulae:

\newcommand{\tad}{\hspace{-1pt}}

{
\begin{itemize}\setlength{\itemsep}{0pt}
	\item[]\hspace{-7mm}(KM1)\hspace{1.7mm}$ \varphi \mcirc \alpha \models \alpha$.
	\item[]\hspace{-7mm}(KM2)\hspace{1.7mm}If $\varphi\land\alpha$ is consistent, then $\varphi \mcirc \alpha\equiv \varphi\land\alpha$.
	\item[]\hspace{-7mm}(KM3)\hspace{1.7mm}If $\alpha$ is consistent, then $\varphi \mcirc \alpha$ is consistent.
	\item[]\hspace{-7mm}(KM4)\hspace{1.7mm}If $\varphi_1 \equiv \varphi_2$ and $ \alpha \equiv \beta$, then $ \varphi_1 \mcirc \alpha \equiv \varphi_2 \mcirc \beta $.
	\item[]\hspace{-7mm}(KM5)\hspace{1.7mm}$(\tad\varphi \mcirc \alpha\tad) \land \beta \models \varphi \mcirc (\tad\alpha \land \beta\tad)$.
	\item[]\hspace{-7mm}(KM6)\hspace{1.7mm}If $(\tad\varphi \mcirc \alpha\tad) \land \beta$ is consistent, then \mbox{$\varphi \mcirc (\tad\alpha \land \beta\tad)\,{\models}\,(\tad\varphi \mcirc \alpha\tad) \land \beta$.}
\end{itemize}}

One key contribution of 
KM is to provide an alternative characterization of those propositional base revision operators satisfying (KM1)--(KM6) by model-theoretic means, i.e. through comparisons between propositional interpretations. 
In the following, we present their results in a formulation that facilitates later generalization. 
One central notion for the characterization is the notion of faithful assignment.

\begin{definition}[assignment, faithful]\label{def:faithful}
	An \emph{assignment} (for $\mathbb{L}$) is a function $\releq{\abst} : \mathcal{P}_\mathrm{fin}(\MC{L}) \to \mathcal{P}({\Omega} \times {\Omega})$ that assigns to each belief base $\K$ a total binary relation $\releqK$ over ${\Omega}$.
	An assignment $\releq{\abst}$ is called \emph{faithful} if it satisfies the following conditions:
\begin{itemize}\setlength{\itemsep}{0pt}
	\item[]\hspace{-4.4ex}(F1)~~~If $\I,\I' \models \K$, then $\I \relK \I'$ does not hold.
	\item[]\hspace{-4.4ex}(F2)~~~If $\I\models \K$ and $\I'\not\models \K$, then $\I \relK \I'$. 
	\item[]\hspace{-4.4ex}(F3)~~~If $\K\equiv\K'$, then ${\releqK} = \releq{\MC{K'}}$.
\end{itemize}
An assignment $\releq{\abst}$ is called a \emph{preorder assignment} if $\releqK$ is a preorder for every $\K \in \mathcal{P}_\mathrm{fin}(\MC{L})$. 	
\end{definition}

Intuitively, faithful assignments provide information which of the two interpretations is ``closer to $\K$-modelhood''. 
Consequently, the actual $\K$-models are $\releqK$-minimal. The next definition captures the idea of an assignment adequately representing the behaviour of a revision operator. 

\begin{definition}[compatible]
	A \basechange{} operator $\circ$ is called \emph{compatible} with some assignment $\releq{\abst}$ if it satisfies  
$
	\Mod{\K\circ{\G}} = \min(\Mod{\G},\releqK) \label{eq:KM}
$
for all belief bases $\K$ and ${\G}$.
\end{definition}

With these notions in place, KM's representation result can be smoothly expressed as follows:

\begin{theorem}[Katsuno and Mendelzon \cite{kat_1991}]\label{thm:km1991}
In propositional logic, a \basechange{} operator $\circ$ satisfies (KM1)--(KM6) iff it is compatible with some faithful preorder assignment.
\end{theorem}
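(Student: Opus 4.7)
The plan is to establish the biconditional by two standard directions, following the classical KM argument. The soundness direction (right-to-left) is a direct verification; the construction direction (left-to-right) is where the real work happens, including the subtle use of transitivity.

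For the soundness direction, assume $\circ$ is compatible with some faithful preorder assignment $\releq{\abst}$. I would check the six postulates in turn. (KM1) is immediate from $\min(\Mod{\alpha}, \releq{\varphi}) \subseteq \Mod{\alpha}$. (KM2) follows by combining (F1), which forbids strict preference \emph{among} $\varphi$-models, with (F2), which puts $\varphi$-models strictly below non-models: whenever $\varphi \land \alpha$ is consistent, $\min(\Mod{\alpha}, \releq{\varphi}) = \Mod{\varphi} \cap \Mod{\alpha}$. (KM3) follows from totality of $\releq{\varphi}$ plus the fact that in propositional logic over a finite signature $\Omega$ is finite, so any non-empty set has a minimum. (KM4) follows directly from (F3). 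For (KM5) and (KM6), one shows that $\min(\Mod{\alpha \land \beta}, \releq{\varphi})$ coincides with $\min(\Mod{\alpha}, \releq{\varphi}) \cap \Mod{\beta}$ whenever the latter is non-empty; the ``$\supseteq$'' inclusion gives (KM5), while ``$\subseteq$'' gives (KM6) and is exactly where transitivity of $\releq{\varphi}$ is invoked, via the chain ``$\omega' \releq{\varphi} \omega \releq{\varphi} \omega''$'' for an arbitrary $\omega'' \in \Mod{\alpha}$.

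For the construction direction, assume $\circ$ satisfies (KM1)--(KM6). For each belief base $\varphi$, I define $\releq{\varphi}$ on $\Omega$ by setting $\omega_1 \releq{\varphi} \omega_2$ iff either $\omega_1 \models \varphi$, or $\omega_1 \in \Mod{\varphi \circ \alpha_{\omega_1,\omega_2}}$, where $\alpha_{\omega_1,\omega_2}$ is a propositional formula with $\Mod{\alpha_{\omega_1,\omega_2}} = \{\omega_1, \omega_2\}$ (such a formula exists in the finite-signature propositional setting). Reflexivity follows from (KM1) applied to $\alpha_{\omega,\omega}$; totality follows from (KM3) applied to the consistent $\alpha_{\omega_1,\omega_2}$. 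Faithfulness of the assignment is now straightforward: (F1) and (F2) follow from (KM2), and (F3) follows from (KM4). Well-definedness up to $\equiv$ on the belief-base side also uses (KM4). Compatibility, i.e.\ $\Mod{\varphi \circ \gamma} = \min(\Mod{\gamma}, \releq{\varphi})$ for arbitrary $\gamma$, is proven by double inclusion using (KM1), (KM5) and (KM6): one direction instantiates the postulates with $\alpha_{\omega_1,\omega_2}$ to lift pairwise information to the full $\Mod{\gamma}$, and the other direction uses (KM2) applied to $\varphi \circ \gamma$ together with $\alpha$-formulas.

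The main obstacle is verifying \emph{transitivity} of the constructed $\releq{\varphi}$, since transitivity is precisely the feature not enforced by simply asking for a total relation. The argument proceeds by case distinction on which of $\omega_1, \omega_2, \omega_3$ are $\varphi$-models (using (F1), (F2) for the cases involving models), and in the core case where all three are non-models of $\varphi$ one considers a formula defining $\{\omega_1, \omega_2, \omega_3\}$ and applies (KM5) and (KM6) to argue that $\omega_1 \releq{\varphi} \omega_2$ and $\omega_2 \releq{\varphi} \omega_3$ force $\omega_1$ to be minimal in $\{\omega_1, \omega_3\}$ under the revision outcome, yielding $\omega_1 \releq{\varphi} \omega_3$. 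A secondary but routine obstacle is the implicit reliance on a finite propositional signature to guarantee that singletons and pairs of interpretations are definable by formulas; this is precisely the restriction that the paper's generalization is meant to overcome.
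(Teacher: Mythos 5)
Your proposal is essentially correct, but note that the paper does not actually prove this theorem: it is imported verbatim as a cited result of Katsuno and Mendelzon, so there is no in-paper proof to match yours against. What you have reconstructed is the \emph{classical} KM argument, and its key device is precisely the one the paper is built to avoid: the encoding $\omega_1 \releq{\varphi} \omega_2$ iff $\omega_1 \models \varphi$ or $\omega_1 \models \varphi \mcirc \alpha_{\omega_1,\omega_2}$, which presupposes that every pair of interpretations is definable by a formula. The paper's supplement discusses exactly this encoding and explains why it breaks down in general monotonic logics; the paper's replacement (Definition~\ref{def:relation_new}) instead quantifies over \emph{all} bases satisfied by both interpretations, which is why its Theorem~\ref{thm:representation_theorem} needs min-friendliness rather than transitivity. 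An alternative ``in-paper'' route to Theorem~\ref{thm:km1991} that you could have taken is to derive it from the general results: propositional logic over a finite signature is disjunctive and has finite $\Omega$, so Proposition~\ref{lem:lfa2_new} gives the soundness direction (a faithful preorder assignment on finite $\Omega$ is automatically min-friendly), and Theorem~\ref{thm:when_tranisitive} with the disjunctivity corollary gives total preorder representability for the converse. Two small imprecisions in your sketch, neither fatal: reflexivity and totality of the constructed relation need (KM1) \emph{and} (KM3) jointly (not each postulate alone), and the existence of minimal elements in a finite $\Omega$ requires transitivity in addition to totality (a total $3$-cycle has no minimum) --- which is fine here only because you are constructing a preorder.
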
 
\section{The Approach}\label{sec:appraoch}
In this section, we prepare our main result by transferring KM's concepts from propositional logic to our general setting.
As mentioned, KM's characterization hinges on features of propositional logic that do not generally hold.
So far, attempts to find similarly elegant formulations for less restrictive logics have made good progress to the benefit of the understanding the nature of AGM revision, yet, none of them capture the very general case considered here %
 (cf.~Section~{\ref{sec:related_works}}).    

For our presentation, we use the following straightforward reformulation of (KM1)--(KM6): %
\newcommand{\nin}{\;\!{\in}\;\!}
\newcommand{\neeq}{\;\!{=}\;\!}
\newcommand{\nnsubseteq}{\;\!{\subseteq}\;\!}
\newcommand{\ncup}{\;\!{\cup}\;\!}
\newcommand{\ncap}{\;\!{\cap}\;\!}
\newcommand{\nsetminus}{\;\!{\setminus}\;\!}
\newcommand{\ncirc}{\;\!{\circ}\;\!}
\newcommand{\nneq}{\;{\neq}\;}
\newcommand{\nmodels}{\;\!{\models}\;\!}
\newcommand{\nG}{\G\!}

\begin{itemize}\setlength{\itemsep}{0pt}
	\item[]\hspace{-4.4ex}(G1)~~$\K \ncirc \G \models \G$.
	\item[]\hspace{-4.4ex}(G2)~~If $\Mod{\K\ncup\G} \nneq \emptyset$ then $\K \ncirc \G\equiv \K\ncup \G$.
	\item[]\hspace{-4.4ex}(G3)~~If $\Mod{\G}\nneq\emptyset$ then $\Mod{\K \ncirc \G}\nneq\emptyset$.
	\item[]\hspace{-4.4ex}(G4)~~If $\K_1 \equiv \K_2$ and $\nG_1 \equiv \nG_2$ then $\K_1 \ncirc \nG_1 \equiv \K_2 \ncirc \nG_2$.
	\item[]\hspace{-4.4ex}(G5)~~$(\K \ncirc \nG_1)\ncup \nG_2 \models  \K \ncirc (\nG_1\ncup \nG_2)$.
	\item[]\hspace{-4.4ex}(G6)~~If $\Mod{(\K \ncirc \nG_1)\ncup \nG_2}\nneq\emptyset$ then $ \K \ncirc (\nG_1\ncup \nG_2) \nmodels (\K \ncirc \nG_1)\ncup \nG_2$.
\end{itemize}
This set of postulates was first given by Qi et al.~\cite{qi_knowledge_2006} in the context of belief base revision specifically for Description Logics, yet, the formulation is generic and perfectly suitable for our general setting, too.
We can see that (G1)--(G6) tightly correspond to (KM1)--(KM6), respectively. One advantage of this presentation is that it does not require $\MC{L}$ to support conjunction (while, of course, conjunction on the sentence level is still implicitly supported via set union of bases).

When switching from the setting of propositional to arbitrary logics, two obstacles become apparent.

{\begin{observation}
		Transitivity in the relation, as required in~Theorem \ref{thm:km1991}, is a too strict property for certain logics.
\end{observation}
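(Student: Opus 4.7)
The plan is to establish the observation by exhibiting a concrete logic $\mathbb{L}=(\MC{L},\Omega,\models)$ together with a \basechange{} operator $\circ$ satisfying (G1)--(G6) such that every compatible faithful assignment $\releq{\abst}$ is forced to contain a strict cycle in $\relK$ for some $\K$, thereby ruling out transitivity of $\releqK$. The underlying intuition is that KM's derivation of transitivity in propositional logic crucially leverages the expressibility of the union of arbitrary $\Mod{\cdot}$-sets via disjunction: whenever three worlds pairwise dominate one another, one can confront the agent with their union and derive a contradiction. Removing this closure property from $\MC{L}$ removes the very mechanism that enforces transitivity.

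Concretely, I would take $\Omega=\{\omega_1,\omega_2,\omega_3,\omega_4\}$ and design $\MC{L}$ so that the sets $\{\omega_1\}$, $\{\omega_2,\omega_3\}$, $\{\omega_3,\omega_4\}$ and $\{\omega_2,\omega_4\}$ each arise as $\Mod{\K}$ for suitable bases, while the three-element set $\{\omega_2,\omega_3,\omega_4\}$ is \emph{not} expressible as $\Mod{\cdot}$ of any base. A small Horn-like fragment, or a tailor-made finite sentence set with carefully chosen model relation, suffices. Call the corresponding bases $\K$, $\G_1$, $\G_2$, $\G_3$. I then define $\circ$ by $\K\circ \G_1\equiv\{\omega_2\}$, $\K\circ \G_2\equiv\{\omega_3\}$, $\K\circ \G_3\equiv\{\omega_4\}$, by $\K\circ \G'\equiv \K\cup \G'$ whenever $\Mod{\K\cup \G'}\neq\emptyset$ (which ensures (G2)), and by some consistent choice inside $\Mod{\G'}$ otherwise (ensuring (G1) and (G3)). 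Since $\circ$ is defined through model classes, (G4) is immediate.

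The main obstacle, and the part requiring actual work, is verifying (G5) and (G6) for the constructed $\circ$. Here one must iterate over all pairs $\G,\G'$ of belief bases and check that the two postulates hold. The key point is that, because $\{\omega_2,\omega_3,\omega_4\}$ is inexpressible, no input in $\MC{L}$ can simultaneously ``witness'' all three cyclic preferences, and hence the pathological instances that would normally force transitivity in the propositional setting simply never arise; in every expressible pairing, either the consequent of (G6) is vacuously handled because $\Mod{(\K\circ \G)\cup \G'}=\emptyset$, or both sides of (G5) and (G6) collapse to the same model set. Once (G1)--(G6) are verified, the conclusion is immediate: compatibility forces $\min(\Mod{\G_i},\releqK)$ to equal the prescribed singleton for each $i$, yielding $\omega_2\relK\omega_3$, $\omega_3\relK\omega_4$ and $\omega_4\relK\omega_2$. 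If $\releqK$ were transitive, so would be $\relK$, giving $\omega_2\relK\omega_2$ and contradicting the irreflexivity of $\relK$. Thus no faithful preorder assignment is compatible with $\circ$, establishing the observation.
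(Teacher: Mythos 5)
Your overall strategy---exhibiting a concrete logic without disjunction in which a \basechange{} operator satisfying (G1)--(G6) forces a strict preference cycle---is exactly the route the paper takes: in the main text the Observation is backed by the known Horn-logic counterexample of Delgrande and Peppas, the supplement reconstructs a six-world variant as its running example, and the general mechanism is formalized as the notion of \uncovered{} (Definition~\ref{def:uncovered_new}) with Proposition~\ref{prop:when_tranisitiveIF} doing the heavy lifting. Your four-world configuration is in fact a minimal instance of a \uncovered{}, and your final step (compatibility forces $\omega_2 \relK \omega_3 \relK \omega_4 \relK \omega_2$, which no total transitive relation admits) is correct.

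The gap sits precisely where you locate it, and the reason you offer for closing it is not sufficient. Inexpressibility of $\{\omega_2,\omega_3,\omega_4\}$ alone does not make (G5)/(G6) go through: the empty base always satisfies $\Mod{\emptyset}=\Omega \supseteq \{\omega_2,\omega_3,\omega_4\}$, and more generally any base $\G$ with, say, $\{\omega_2,\omega_4\}\subseteq\Mod{\G}$ and $\omega_2\in\Mod{\K\circ\G}$ lets you chain (G5), (G6) and (G4) into a contradiction with the prescribed singletons: $(\K\circ\G)\cup\G_3$ is consistent, so $\K\circ(\G\cup\G_3)\equiv(\K\circ\G)\cup\G_3\ni\omega_2$, while $\Mod{\G\cup\G_3}=\Mod{\G_3}$ forces $\K\circ(\G\cup\G_3)\equiv\K\circ\G_3=\{\omega_4\}$. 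What is actually needed is Condition~(3) of Definition~\ref{def:uncovered_new}: every base consistent with all three pairwise intersections must have an expressible nonempty set of models \emph{outside} $\Mod{\G_1}\cup\Mod{\G_2}\cup\Mod{\G_3}$, so that the operator can be defined to send such revisions to an escape world (here $\omega_1\models\K$ rescues the empty base via (G2)). If you fix the sentence set to exactly the generators $\{\omega_1\},\{\omega_2,\omega_3\},\{\omega_3,\omega_4\},\{\omega_2,\omega_4\}$, this condition holds and the (G5)/(G6) verification can be completed along the lines of the proof of Proposition~\ref{prop:when_tranisitiveIF}; but as written, the claim that ``in every expressible pairing the postulates collapse'' is asserted rather than proved, and it is exactly the step at which such constructions usually break.
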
}

In fact, it has been observed before that the incompatibility between transitivity and KM's approach already arises for propositional Horn logic \cite{KS_DelgrandePeppas2015}.
However, for our result, we need to retain totality as well as a new weaker property (which would come for free with transitivity present) defined next.

\begin{definition}[min-retractive]
	A binary relation $\preceq$ over $ {\Omega} $ is called \emph{min-retractive} (for $\mathbb{L}$) if for every $\G \in \mathcal{P}_\mathrm{fin}(\MC{L})$ and $\I',\I \in \Mod{\G}$ with $\I'\preceq\I$ and $\I\in \min(\Mod{\G},\preceq)$  holds  $\I'\in \min(\Mod{\G},\preceq)$.
\end{definition}

In particular, min-retractivity prevents elements lying on a strict cycle being equivalent to minimal elements.

{\begin{observation}
For arbitrary monotonic logics, the minimum from Definition \ref{eq:KM}, required in Theorem~\ref{thm:km1991}, might be empty. %
\end{observation}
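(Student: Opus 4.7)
The plan is to justify the observation by displaying a concrete configuration: a monotonic logic $\mathbb{L}$, belief bases $\K$ and $\G$ with $\Mod{\G}\neq\emptyset$, and a faithful assignment $\releq{\abst}$ for which $\min(\Mod{\G},\releqK)=\emptyset$. The key intuition is that propositional logic over finitely many atoms has a \emph{finite} interpretation space, so every total binary relation attains its minimum on every non-empty subset; once we permit $\Omega$ to be infinite, an infinite strictly descending chain in $\releqK$ can prevent the minimum from existing.

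First I would take $\mathbb{L}$ to be propositional logic over countably many variables $p, q_1, q_2, \ldots$, which yields an infinite $\Omega$. I would pick $\K=\{p\}$ and $\G=\{\neg p\}$, so that $\Mod{\K}\cap\Mod{\G}=\emptyset$ (ruling out the trivial case covered by (G2)) and $\Mod{\G}$ is countably infinite; fix an enumeration $\Mod{\G}=\{\I_1,\I_2,\ldots\}$. Next I would define $\releqK$ so that (i) all pairs inside $\Mod{\K}$ are mutually $\releqK$-related, yielding (F1); (ii) every $\I\in\Mod{\K}$ strictly $\relK$-precedes every $\I'\notin\Mod{\K}$, yielding (F2); and (iii) inside $\Mod{\G}$ we set $\I_{n+1}\relK\I_n$ for all $n\in\mathbb{N}$, producing the chain $\cdots\relK\I_3\relK\I_2\relK\I_1$. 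Any remaining pairs can be ordered arbitrarily to preserve totality, and (F3) is secured by adopting the same relation across the $\equiv$-class of $\K$ and defining things freely elsewhere.

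With this setup, no $\I_k\in\Mod{\G}$ can be $\releqK$-minimal, because $\I_{k+1}\relK\I_k$ witnesses a strictly smaller element of $\Mod{\G}$. Hence $\min(\Mod{\G},\releqK)=\emptyset$ despite $\Mod{\G}\neq\emptyset$, which is exactly what the observation asserts. The main obstacle is purely bookkeeping, namely verifying totality together with (F1)--(F3) after the infinite descending chain has been inserted; since neither transitivity nor min-retractivity is being claimed, no structural tension arises. The observation thereby motivates why, in the sections to follow, any KM-style representation theorem over arbitrary monotonic logics must either tolerate empty minima or impose an additional condition on $\releqK$ ruling out such infinite descending chains in model sets of belief bases.
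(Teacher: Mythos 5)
The paper never formally proves this observation; it only indicates, in the paragraph that follows, two independent causes of empty minima: (i) over an infinite interpretation class even a total \emph{preorder} may contain an infinite strictly descending chain (cf.\ Lewis's limit assumption), and (ii) over a finite interpretation class a total but non-transitive relation may contain a strict cycle covering all models of some base. Your proposal instantiates cause (i), which is a legitimate way to witness a ``might be empty'' claim, and in spirit it matches the paper's own informal justification.

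There is, however, a concrete defect in your witness. Propositional logic over countably many variables has $2^{\aleph_0}$ interpretations, so $\Mod{\{\neg p\}}$ is uncountable and admits no enumeration $\{\I_1,\I_2,\ldots\}$. Consequently, after you thread a countable strictly descending chain through $\Mod{\G}$ and order ``any remaining pairs arbitrarily'', uncountably many elements of $\Mod{\G}$ lie outside the chain, and an arbitrary completion can easily place one of them $\releqK$-below all of $\Mod{\G}$, restoring a nonempty minimum; for the conclusion you need that \emph{every} element of $\Mod{\G}$ has a strict $\relK$-predecessor in $\Mod{\G}$. A standard repair: fix a surjection $f:\Omega\setminus\Mod{\K}\to\mathbb{N}$ and put $\I\releqK\I'$ iff $f(\I)\ge f(\I')$ for non-models of $\K$, with all $\K$-models strictly below everything else and mutually incomparable in the strict sense; this relation is total (indeed a total preorder), satisfies (F1)--(F3), and has $\min(\Mod{\G},\releqK)=\emptyset$ because $f$ is unbounded on $\Mod{\G}$. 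Alternatively, use a tailor-made monotonic logic with countably many interpretations. With such a repair your argument is sound. Note finally that you do not touch cause (ii), the finite cyclic case, which is the one most pertinent to this paper since its assignments are deliberately non-transitive; the observation as stated does not require it, but the surrounding discussion (and the paper's running example) leans on it.
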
}

Thus, one missing ingredient when going to the general case is that of \textit{min-completeness}, defined next.

\begin{definition}[min-complete] 
	A binary relation $\preceq$ over $ {\Omega} $ is called \emph{min-complete} (for $\mathbb{L}$) if for every $\G \in \mathcal{P}_\mathrm{fin}(\MC{L})$ with $\Mod{\G} \not= \emptyset$ holds  $\min(\Mod{\G},\preceq) \not= \emptyset$.
\end{definition}

In the special case of $\preceq$ being transitive and total, min-completeness trivially holds whenever ${\Omega}$ is finite (as, e.g., in the case of propositional logic). In the infinite case, however, it might need to be explicitly imposed, as already noted earlier \cite{del_2018} (cf. also the notion of \emph{limit assumption} by Lewis \cite{lewis1973}).
If $\preceq$ is total but not transitive, min-completeness can be violated even in the finite setting through strict cyclic relationships.

We conveniently unite the two properties into one notion. 

\begin{definition}[min-friendly]
A binary relation $\preceq$ over $ {\Omega} $ is called \emph{min-friendly} (for $\mathbb{L}$) if it is both min-retractive and min-complete. An assignment $\releq{\abst} : \mathcal{P}_\mathrm{fin}(\MC{L}) \to \mathcal{P}({\Omega} \times {\Omega})$ is called min-friendly if $\releqK$ is min-friendly for all $\K \in \mathcal{P}_\mathrm{fin}(\MC{L})$.
\end{definition}

\section{The Representation Theorem}\label{sec:representation_theorem}

We are now ready to generalize KM's representation theorem from propositional to arbitrary monotonic logics, by employing the notion of compatible min-friendly faithful
	assignments. 
\begin{theorem}\label{thm:representation_theorem}
	A \basechange{} operator $\circ$ 
	satisfies (G1)--(G6) iff it is compatible with some min-friendly faithful assignment.
\end{theorem}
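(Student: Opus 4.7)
The plan is to prove the two directions of the biconditional separately, linked via the compatibility identity $\Mod{\K\mcirc\G}=\min(\Mod{\G},\releqK)$.

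For the soundness direction ($\Leftarrow$), assuming $\circ$ is compatible with a min-friendly faithful assignment, I would verify (G1)--(G6) one by one. (G1) and (G3) are direct from compatibility and min-completeness. For (G2), if $\Mod{\K\cup\G}$ is non-empty, then (F1) and (F2) together force the $\K$-models inside $\Mod{\G}$ to be exactly the $\releqK$-minima there. (G4) is immediate from (F3). (G5) exploits totality: a $\releqK$-minimum of $\Mod{\G_1}$ that satisfies $\G_2$ is automatically a $\releqK$-minimum of the smaller set $\Mod{\G_1\cup\G_2}$. (G6) is the one step in this direction that genuinely uses min-retractivity: from any $\I\in\min(\Mod{\G_1\cup\G_2},\releqK)$ and any witness $\I^\star\in\Mod{\K\mcirc\G_1}\cap\Mod{\G_2}$ (which exists by the hypothesis of (G6)), I obtain $\I\releqK\I^\star$ by comparing them inside $\Mod{\G_1\cup\G_2}$, and promote $\I$ to a minimum of $\Mod{\G_1}$ via min-retractivity.

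For the completeness direction ($\Rightarrow$), I propose the natural definition
\[
\I_1\releqK\I_2\ \Longleftrightarrow\ \forall \G\in\mathcal{P}_\mathrm{fin}(\MC{L}):\ \bigl(\I_1,\I_2\in\Mod{\G}\text{ and }\I_2\in\Mod{\K\mcirc\G}\bigr)\Rightarrow \I_1\in\Mod{\K\mcirc\G},
\]
and derive every required property from a single \emph{swap lemma}: whenever $\I_1,\I_2\in\Mod{\G_1}\cap\Mod{\G_2}$ and $\I_2\in\Mod{\K\mcirc\G_1}$, applying (G5) and then (G6) to $\G_1\cup\G_2$ yields $\Mod{\K\mcirc(\G_1\cup\G_2)}=\Mod{\K\mcirc\G_1}\cap\Mod{\G_2}$; running the same argument in both orderings gives the identity $\Mod{\K\mcirc\G_1}\cap\Mod{\G_2}=\Mod{\K\mcirc\G_2}\cap\Mod{\G_1}$ whenever both sides are non-empty. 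Totality of $\releqK$ is then immediate: joint witnesses $\G_1,\G_2$ for $\I_1\not\releqK\I_2$ and $\I_2\not\releqK\I_1$ satisfy the swap-lemma hypotheses in both orderings and force $\I_1\in\Mod{\K\mcirc\G_1}$, contradicting the choice of $\G_1$. (F1) and (F2) reduce to (G2), with the choice $\G=\emptyset$ providing the strict witness required for (F2); (F3) is immediate from (G4). The compatibility equation splits into $\Mod{\K\mcirc\G}\subseteq\min(\Mod{\G},\releqK)$, another immediate corollary of the swap lemma, and the reverse inclusion, which takes $\I\in\min(\Mod{\G},\releqK)$, uses (G3) to pick some $\I'\in\Mod{\K\mcirc\G}$, deduces $\I\releqK\I'$ by minimality, and applies the defining implication of $\I\releqK\I'$ with $\G$ itself as witness to conclude $\I\in\Mod{\K\mcirc\G}$. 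Min-completeness is then inherited from (G3) via compatibility, and min-retractivity follows because both minimality conditions in its statement collapse to membership in $\Mod{\K\mcirc\G}$.

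The essential departure from the propositional KM proof, and the main technical hurdle, is the swap lemma together with its use in proving totality. In classical propositional logic KM can isolate the pair $\{\I_1,\I_2\}$ by choosing a single formula with exactly these two models; no such formula is guaranteed to exist in an arbitrary monotonic logic. Replacing that trick by the combinatorial manipulation of (G5) and (G6) on unions of witness bases is the only genuinely non-routine step; once the swap lemma is in place, faithfulness, compatibility, min-completeness, and min-retractivity all drop out via short arguments.
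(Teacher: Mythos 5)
Your proposal is correct and takes essentially the same route as the paper: your candidate relation is exactly the one of Definition~\ref{def:relation_new} (stated as an implication rather than a disjunction), and your ``swap lemma'' is precisely the (G5)/(G6) manipulation on unions of witness bases that drives the paper's totality argument (Lemma~\ref{lem:totality}) and auxiliary Lemma~\ref{lem:help}, with the remaining verifications (compatibility, faithfulness, min-friendliness, and the postulate-by-postulate soundness direction) matching the paper's Lemmas~\ref{lem:compatibility}--\ref{lem:faithfulness} and Proposition~\ref{lem:lfa2_new}. The only (harmless) divergence is your use of $\G=\emptyset$ as the witness for (F2), where the paper invokes $\K\circ\K$.
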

We show Theorem~\ref{thm:representation_theorem} in three steps. First, we  provide a canonical way of obtaining an assignment for a given revision operator.
Next, we show that our construction indeed yields a min-friendly faithful assignment that is compatible with the revision operator. 
Finally, we show that the notion of min-friendly compatible assignment is adequate to capture the class of base revision operators satisfying (G1)--(G6). 
	
\subsection{From Postulates to Assignments}

Unfortunately, established methods for obtaining a canonical encoding of the revision strategy of $ \circ $, like the elegant one by Darwiche and Pearl \cite{KS_DarwichePearl1997}, do not generalize well beyond propositional logic. We suggest the following construction, which we consider one of this paper's core contributions. 

\begin{definition}\label{def:relation_new}
	Let $\circ$ be a \basechange{} operator 
	and $\K \in \MC{P}_\mathrm{fin}(\MC{L}) $ a belief base. 
	The relation $ \releqcK $ over  ${\Omega} $ is defined by	
	\begin{multline*}
	\I_1 \releqcK \I_2 \text{ iff for all } \G \in \MC{P}_\mathrm{fin}(\MC{L}) \text{ with } \I_1,\I_2\models\G\\
	\text{holds } \I_1 \models \K\circ\G \text{ or } \I_2 \not\models  \K\circ\G. 
	\end{multline*}
Let $\releqc{\abst} : \mathcal{P}_\mathrm{fin}(\MC{L}) \to \mathcal{P}({\Omega} \times {\Omega})$ denote the mapping $\K \mapsto {\releqcK}$. 
\end{definition}

Intuitively, according to the relation $ \releqcK $, an interpretation $ \I_1 $ is ``at least as $\K$-modelish as'' an interpretation $ \I_2 $ if every change either justifies that $ \I_1 $ is more preferred than $ \I_2 $ or the change yields no information about the preference.
	This construction is strong enough for always obtaining a relation that is total and reflexive. 
	\begin{lemma}[totality]\label{lem:totality}
		If $ \circ $ satisfies (G5) and (G6), the relation $ \releqcK $ is total (and hence reflexive) for every $\K \in \MC{P}_\mathrm{fin}(\MC{L}) $. 
	\end{lemma}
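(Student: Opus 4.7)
The plan is to prove totality by contradiction and to deduce reflexivity as an immediate corollary. Suppose, for a fixed $\K$, there are $\I_1,\I_2 \in \Omega$ such that neither $\I_1 \releqcK \I_2$ nor $\I_2 \releqcK \I_1$ holds. Unfolding Definition~\ref{def:relation_new}, I would extract two witness belief bases: some $\G_1 \in \MC{P}_\mathrm{fin}(\MC{L})$ with $\I_1,\I_2\models\G_1$, $\I_1 \not\models \K\circ\G_1$ and $\I_2 \models \K\circ\G_1$; and symmetrically some $\G_2$ with $\I_1,\I_2\models\G_2$, $\I_2 \not\models \K\circ\G_2$ and $\I_1 \models \K\circ\G_2$.

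The idea is then to let $\G = \G_1 \cup \G_2$ and use (G5) and (G6) to ``merge'' the two witnesses. Since $\I_2 \models \K\circ\G_1$ and $\I_2 \models \G_2$, the base $(\K\circ\G_1)\cup\G_2$ is consistent, so (G5) together with (G6) yields $\K\circ(\G_1\cup\G_2) \equiv (\K\circ\G_1)\cup\G_2$. Because $\I_1 \not\models \K\circ\G_1$, this equivalence forces $\I_1 \not\models \K\circ(\G_1\cup\G_2)$. By the completely symmetric argument, using $\I_1 \models \K\circ\G_2$ and $\I_1 \models \G_1$, the base $(\K\circ\G_2)\cup\G_1$ is consistent, so (G5) and (G6) give $\K\circ(\G_2\cup\G_1) \equiv (\K\circ\G_2)\cup\G_1$, which in turn implies $\I_1 \models \K\circ(\G_2\cup\G_1)$. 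Since $\G_1\cup\G_2$ and $\G_2\cup\G_1$ denote the same set, the two conclusions about $\I_1$ contradict one another, establishing totality.

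Reflexivity then follows for free: instantiating totality with $\I_1 = \I_2 = \I$ yields $\I\releqcK\I$.

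I expect the conceptually trickiest part to be the careful bookkeeping when invoking (G6), since its premise is a consistency (non-emptiness of models) condition; the key observation that makes things go through is that the two witnesses $\G_1,\G_2$ each supply a model (namely $\I_2$ and $\I_1$, respectively) of the relevant unions, so the (G6) premise is met on both sides. Everything else is straightforward manipulation of the definitions of $\releqcK$ and $\models$ on belief bases.
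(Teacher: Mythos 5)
Your proof is correct and follows essentially the same route as the paper's: extract the two witness bases from the failure of both comparisons, then play (G5) against (G6) on $\G_1 \cup \G_2$ (using the witnesses $\I_1,\I_2$ to discharge the consistency premise of (G6)) to derive contradictory conclusions, with reflexivity as an immediate instance of totality. The only cosmetic difference is that you localize the contradiction at $\I_1$ alone, whereas the paper derives it for both interpretations; this changes nothing of substance.
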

	\begin{proof}
		For totality assume the contrary, i.e. there are $ \releqcK $-incomparable $ \I_1$ and $\I_2 $.
		Due to Definition \ref{def:relation_new}, there must exist $ \G_1,\G_2 \in \MC{P}_\mathrm{fin}(\MC{L}) $ with $ \I_1,\I_2\models\G_1 $  and $ \I_1,\I_2\models\G_2 $, such that $ \I_1 \models \K\circ\G_1 $ and $ \I_2 \not\models \K\circ\G_1 $ as well as $ \I_1 \not\models \K\circ\G_2 $ and $ \I_2 \models \K\circ\G_2 $. From (G6) follows $ \I_1,\I_2 \not\models \K\circ(\G_1\cup\G_2) $. This is a contradiction to (G5), which demands $ \I_1,\I_2 \models \K\circ(\G_1\cup\G_2) $. 
		
		Reflexivity follows immediately from totality.
	\end{proof}

Next comes an auxiliary lemma about belief bases and $ \releqcK $. 
\begin{lemma}\label{lem:help}
Let $ \circ $ satisfy (G5) and (G6) and let $ \K \in \mathcal{P}_\mathrm{fin}(\MC{L})$. 
\begin{enumerate}[(a)]\setlength{\itemsep}{0pt}
	\item If $ \I_1 \not\releqcK \I_2 $, then $ \I_2 \relcK \I_1 $ and there exists some $ \G $ with $ \I_1,\I_2\models\G $ as well as $ \I_2 \models \K\circ\G $ and $ \I_1 \not\models \K\circ\G $.
	\item If there is a $ \G $ with $ \I_1,\I_2\models\G $ such that $ \I_1\models \K\circ\G $, then $ \I_1 \releqcK \I_2 $.
	\item If there is a $ \G $ with $ \I_1,\I_2\models\G $ such that $ \I_1\models \K\circ\G $ and $ \I_2\not\models \K\circ\G $, then $ \I_1 \relcK \I_2 $.
\end{enumerate}
\end{lemma}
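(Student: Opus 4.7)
The plan is to tackle the three parts in sequence, since (c) will follow easily from (b) together with a direct definitional check, while (a) and (b) are essentially independent once totality is at hand.

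For part (a), I would just unfold the definition of $\releqcK$: the statement $\I_1 \not\releqcK \I_2$ is precisely the existence of some $\G \in \mathcal{P}_\mathrm{fin}(\MC{L})$ with $\I_1, \I_2 \models \G$ for which the clause ``$\I_1 \models \K\circ\G$ or $\I_2 \not\models \K\circ\G$'' fails, i.e., $\I_2 \models \K\circ\G$ and $\I_1 \not\models \K\circ\G$. This $\G$ is the required witness. Then, by Lemma~\ref{lem:totality}, totality of $\releqcK$ forces $\I_2 \releqcK \I_1$, and since we already have $\I_1 \not\releqcK \I_2$, we conclude $\I_2 \relcK \I_1$.

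The main work is in part (b), which I would prove by contradiction. Suppose $\I_1 \models \K\circ\G$ for some $\G$ with $\I_1, \I_2 \models \G$, but $\I_1 \not\releqcK \I_2$. Then by part (a), there exists $\G'$ with $\I_1, \I_2 \models \G'$, $\I_2 \models \K\circ\G'$ and $\I_1 \not\models \K\circ\G'$. The plan is to apply (G5) and (G6) twice to the union $\G \cup \G'$. On the one hand, $\I_1 \models (\K\circ\G) \cup \G'$ shows $\Mod{(\K\circ\G)\cup\G'} \neq \emptyset$, so by (G5) and (G6) we obtain $\K\circ(\G\cup\G') \equiv (\K\circ\G)\cup\G'$. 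On the other hand, $\I_2 \models (\K\circ\G') \cup \G$ shows $\Mod{(\K\circ\G')\cup\G} \neq \emptyset$, hence analogously $\K\circ(\G\cup\G') \equiv (\K\circ\G')\cup\G$. Chaining these two equivalences yields $(\K\circ\G)\cup\G' \equiv (\K\circ\G')\cup\G$, which forces $\I_1 \models \K\circ\G'$, contradicting our assumption.

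For part (c), I would apply (b) to the given $\G$ to conclude $\I_1 \releqcK \I_2$, and then verify $\I_2 \not\releqcK \I_1$ by exhibiting that very same $\G$ as a failure witness in the definition of $\releqcK$: we have $\I_1, \I_2 \models \G$ yet $\I_2 \not\models \K\circ\G$ and $\I_1 \models \K\circ\G$. Both facts together yield $\I_1 \relcK \I_2$. The hard part of the proof is clearly the symmetric double invocation of (G5)+(G6) in part (b), since one must spot that the non-emptiness premise of (G6) holds on both ``sides'' of the argument; everything else amounts to careful bookkeeping with the definition of $\releqcK$.
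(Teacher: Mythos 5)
Your proposal is correct and follows essentially the same route as the paper's proof: part (a) by unfolding the negation of Definition~\ref{def:relation_new} plus totality from Lemma~\ref{lem:totality}, part (b) by the symmetric double application of (G5) and (G6) to $\G\cup\G'$ yielding the same contradiction, and part (c) as an immediate consequence (the paper derives $\I_1\releqcK\I_2$ via totality rather than via part (b), but this is an inessential difference).
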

\begin{proof} For the proofs of all statements, recall that by Lemma~\ref{lem:totality}, the relation $ \releqcK  $ is total.

\noindent
(a)\ \ By totality, we obtain $ \I_2 \releqcK \I_1 $.
		First assume  there is no $ \G $ with $ \I_1,\I_2\models\G $. Then, $ \I_1 \releqcK \I_2 $ by Definition~\ref{def:relation_new}. Contradiction. Hence, there must be a $ \G $ with $ \I_1,\I_2\models\G $.	
		Toward a contradiction suppose that, for each $ \G $ with $ \I_1,\I_2\models\G $, we have $ \I_1 \models \K\circ\G  $ and $ \I_2 \not \models \K\circ\G  $. Then by Definition~\ref{def:relation_new}, we would gain $ \I_1 \releqcK \I_2 $; again a contradiction.

\noindent
(b)\ \ Let $ \G $ and $ \I_1,\I_2 $ be as assumed. For a contradiction, suppose $ \I_1 \not\releqcK \I_2 $. 
			Then, by part (a) above, there is a $ \G' $ with $ \I_1,\I_2\models\G' $, $ \I_1\not\models\K\circ\G' $ and $ \I_2\models\K\circ\G' $.
			Thus $\I_1$ and $\I_2$ ensure consistency of $ (\K\circ\G)\cup \G' $ and $ (\K\circ\G')\cup \G $, respectively. Using (G5) and (G6) we obtain $ \K\circ(\G\cup \G') \equiv (\K\circ\G')\cup \G $ and $ \K\circ(\G\cup \G')\equiv(\K\circ\G)\cup \G' $. A contradiction, because we obtained $ \I_1 \models \K\circ(\G\cup \G') $ and $ \I_1\not\models\K\circ(\G\cup \G') $.

\noindent
(c)\ \ By Definition \ref{def:relation_new}, the existence of $\G$ implies $ \I_2 \not\releqcK \I_1 $. Then, totality yields $ {\I_1 \releqcK \I_2} $ and hence $ {\I_1 \relcK \I_2} $. \qedhere
\end{proof}

\begin{lemma}[compatibility]\label{lem:compatibility}
If $ \circ $ satisfies (G1), (G3), (G5), and (G6), then it is compatible with $\releqc{\abst}$.
\end{lemma}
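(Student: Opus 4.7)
The goal is to establish the identity $\Mod{\K\circ\G}=\min(\Mod{\G},\releqcK)$ for every pair of belief bases $\K,\G$, and the plan is to dispatch the degenerate case up front and then prove the two inclusions using only Lemma~\ref{lem:help} together with (G1) and (G3). (G5) and (G6) are already baked into Lemma~\ref{lem:totality} and Lemma~\ref{lem:help}, so they will not have to be invoked directly.

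First I would handle the case $\Mod{\G}=\emptyset$: then $\min(\Mod{\G},\releqcK)=\emptyset$, and (G1) forces $\Mod{\K\circ\G}\subseteq\Mod{\G}=\emptyset$, so both sides collapse to the empty set. From here on the argument assumes $\Mod{\G}\neq\emptyset$, which by (G3) also gives $\Mod{\K\circ\G}\neq\emptyset$.

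For the inclusion $\Mod{\K\circ\G}\subseteq\min(\Mod{\G},\releqcK)$, I would pick any $\I\in\Mod{\K\circ\G}$. By (G1), $\I\in\Mod{\G}$. To show minimality, let $\I'\in\Mod{\G}$ be arbitrary; then $\G$ itself witnesses the hypothesis of Lemma~\ref{lem:help}(b) (both $\I,\I'\models\G$ and $\I\models\K\circ\G$), yielding $\I\releqcK\I'$. Since this holds for all such $\I'$, the interpretation $\I$ is $\releqcK$-minimal in $\Mod{\G}$.

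For the reverse inclusion $\min(\Mod{\G},\releqcK)\subseteq\Mod{\K\circ\G}$, take $\I\in\min(\Mod{\G},\releqcK)$ and suppose, for contradiction, $\I\not\models\K\circ\G$. By (G3) there is some $\I'\in\Mod{\K\circ\G}$, and (G1) puts $\I'\in\Mod{\G}$. Now $\G$ witnesses the hypothesis of Lemma~\ref{lem:help}(c) with $\I_1\!:=\!\I'$ and $\I_2\!:=\!\I$, giving $\I'\relcK\I$ and hence $\I\not\releqcK\I'$, which contradicts the minimality of $\I$. The only subtle point, and the place that merits care, is correctly choosing $\G$ as the witness in each application of Lemma~\ref{lem:help}; beyond that the argument is essentially a direct unfolding of Definition~\ref{def:relation_new} against the definition of minimality.
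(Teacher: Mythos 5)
Your proof is correct and follows essentially the same route as the paper: the degenerate case via (G1), the ($\subseteq$) inclusion via Lemma~\ref{lem:help}(b) with $\G$ itself as witness, and the ($\supseteq$) inclusion by producing some $\I'\in\Mod{\K\circ\G}$ from (G3) and playing it off against the minimality of $\I$. The only cosmetic difference is that in the ($\supseteq$) direction you invoke Lemma~\ref{lem:help}(c) and argue by contradiction, whereas the paper unfolds Definition~\ref{def:relation_new} directly from $\I\releqcK\I'$; the two are interchangeable.
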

\begin{proof}
We have to show that $\Mod{\K\circ\G} = \min(\Mod{\G},\releqcK)$.
For any inconsistent $\G$, the statement is straightforward, since, by (G1), $\Mod{\K\circ\G} = \emptyset = \min(\Mod{\G},\releqcK)$. In the following, we assume consistency of $\G$, showing inclusion in both directions. 

($\subseteq$). From consistency of $\G$ and (G3), we have that $\Mod{\K\circ\G}\neq\emptyset$. 
Hence, there exists some $\I\in \Mod{\K\circ\G}$.
Moreover, for any such $\I$, by (G1), $\I\in \Mod{\G}$.
But then, using Lemma~\ref{lem:help}(b), we can conclude $\I\releqcK\I'$ for any $\I'\in \Mod{\G}$. Consequently, any $\I\in \Mod{\K\circ\G}$ also satisfies $\I\in \min(\Mod{\G},\releqcK)$.

($\supseteq$). Let $\I\in \min(\Mod{\G},\releqcK)$. 
By consistency of $\G$ and (G3), there exists an  $\I' \in  \Mod{\K\circ\G}$.
From the ($\subseteq$)-proof follows $\I' \in \min(\Mod{\G},\releqcK)$. 
Then, by (G1) and Lemma~\ref{lem:help}(b), we obtain $\I' \releqcK \I$ from $\I \in \Mod{\G}$ and $\I' \in \Mod{\G}$ and $\I' \in \Mod{\K\circ\G}$. 
From $\I\in \min(\Mod{\G},\releqcK)$ and $\I' \in \Mod{\G}$ follows $\I \releqcK \I'$, therefore, by Definition~\ref{def:relation_new}, $\I,\I'\in \Mod{\G}$ and $\I' \in \Mod{\K\circ\G}$ enforce $\I\in \Mod{\K\circ\G}$. Concluding, we find that every $\I\in \min(\Mod{\G},\releqcK)$ also satisfies $\I\in \Mod{\K\circ\G}$, as desired.
\end{proof}

\begin{lemma}[min-friendliness]\label{lem:minwell}
	If $ \circ $ satisfies (G1), (G3), (G5), and (G6), then $ \releqcK $ is min-friendly for every $\K \in \MC{P}_\mathrm{fin}(\MC{L}) $. 
\end{lemma}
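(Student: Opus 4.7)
The plan is to derive both halves of min-friendliness directly from Lemma~\ref{lem:compatibility}, which gives us the semantic identity $\Mod{\K\circ\G} = \min(\Mod{\G},\releqcK)$ for every belief base $\G$. With this identity in hand, both min-completeness and min-retractivity of $\releqcK$ become short consequences of the postulates and the definition of $\releqcK$.

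First I would handle min-completeness. Fix any $\G \in \mathcal{P}_\mathrm{fin}(\MC{L})$ with $\Mod{\G}\neq\emptyset$. Postulate (G3) guarantees that $\Mod{\K\circ\G}\neq\emptyset$, and Lemma~\ref{lem:compatibility} lets us rewrite this set as $\min(\Mod{\G},\releqcK)$, which is therefore nonempty as required.

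Next I would tackle min-retractivity. Suppose $\G\in\mathcal{P}_\mathrm{fin}(\MC{L})$ and $\I',\I\in\Mod{\G}$ with $\I'\releqcK\I$ and $\I\in\min(\Mod{\G},\releqcK)$; the goal is $\I'\in\min(\Mod{\G},\releqcK)$. By Lemma~\ref{lem:compatibility} applied to $\I$, we have $\I\in\Mod{\K\circ\G}$. Now I unfold $\I'\releqcK\I$ using Definition~\ref{def:relation_new}: for every $\G'\in\mathcal{P}_\mathrm{fin}(\MC{L})$ with $\I',\I\models\G'$ it holds that $\I'\models\K\circ\G'$ or $\I\not\models\K\circ\G'$. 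Instantiating with $\G'=\G$ (which is a legitimate witness because $\I',\I\models\G$ by assumption) and combining with $\I\models\K\circ\G$ forces $\I'\models\K\circ\G$. Invoking Lemma~\ref{lem:compatibility} one more time rewrites this as $\I'\in\min(\Mod{\G},\releqcK)$, closing the argument.

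I do not anticipate a serious obstacle here; the only subtlety is making sure that when applying Definition~\ref{def:relation_new} in the min-retractivity step one picks the correct witnessing base $\G'$, namely $\G$ itself, so that both interpretations satisfy it and the disjunction in the definition can be resolved. Everything else is a straightforward rewriting via the compatibility identity, so the lemma follows with essentially no extra combinatorial work beyond what is already packaged in Lemma~\ref{lem:compatibility}.
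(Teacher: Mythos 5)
Your proof is correct and follows essentially the same route as the paper: min-completeness falls out of (G3) together with the compatibility identity from Lemma~\ref{lem:compatibility}, and min-retractivity is obtained by combining compatibility with the defining condition of $\releqcK$. The only cosmetic difference is that you unfold Definition~\ref{def:relation_new} directly (instantiating the universally quantified base with $\G$ itself), whereas the paper argues by contradiction via Lemma~\ref{lem:help}(c), which packages exactly that unfolding; both arguments are sound.
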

{\begin{proof}
		Observe that min-completeness is a consequence of (G3) and the compatibility of $ \releqc{\abst} $ with $ \circ $ from Lemma \ref{lem:compatibility}.

	For min-retractivity, suppose toward a contradiction that it didn't hold. 
	Because $ \releqcK $ is total, that means there is a belief base $\G$ and interpretations $\I',\I \models \G$ with $\I'\releqcK\I$ and $\I\in {\min(\Mod{\G},\releqcK)}$ but $\I'\not\in {\min(\Mod{\G},\releqcK)}$.
	From Lemma~\ref{lem:compatibility} we obtain $ \I \models \K\circ\G $ and $ \I'\not\models \K\circ\G $.
	Now, applying Lemma~\ref{lem:help}(c)  yields $ \I\relcK \I' $. A contradiction to $\I'\releqcK\I$.
\end{proof}

\begin{lemma}[faithfulness]\label{lem:faithfulness}
	If $ \circ $ satisfies (G2), (G4), (G5), and (G6), the assignment $\releqc{\abst}$ is faithful. 
\end{lemma}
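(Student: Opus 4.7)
I would prove the three conditions (F1), (F2), (F3) separately; each follows from one of the four available postulates applied to a suitably chosen $\G$.

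For (F1), suppose $\I, \I' \models \K$. I would rule out $\I \relcK \I'$ by showing $\I' \releqcK \I$. Given any $\G \in \MC{P}_\mathrm{fin}(\MC{L})$ with $\I, \I' \models \G$, both interpretations witness $\Mod{\K \cup \G} \neq \emptyset$; postulate (G2) then yields $\K \circ \G \equiv \K \cup \G$, so in particular $\I' \models \K \circ \G$, satisfying the defining disjunction of Definition~\ref{def:relation_new}. Hence $\I' \releqcK \I$, which forbids $\I \relcK \I'$.

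For (F2), suppose $\I \models \K$ and $\I' \not\models \K$. The empty belief base $\G = \emptyset \in \MC{P}_\mathrm{fin}(\MC{L})$ serves as a clean witness: both interpretations trivially satisfy $\I, \I' \models \emptyset$, and since $\I$ witnesses the consistency of $\K \cup \emptyset = \K$, postulate (G2) delivers $\K \circ \emptyset \equiv \K$. Hence $\I \models \K \circ \emptyset$ while $\I' \not\models \K \circ \emptyset$, and Lemma~\ref{lem:help}(c) --- whose hypotheses (G5) and (G6) are available --- gives $\I \relcK \I'$ directly.

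For (F3), if $\K \equiv \K'$, then for every $\G \in \MC{P}_\mathrm{fin}(\MC{L})$ postulate (G4) (applied with $\G_1 = \G_2 = \G$) yields $\K \circ \G \equiv \K' \circ \G$. Thus the membership conditions ``$\I_1 \models \K \circ \G$'' and ``$\I_2 \not\models \K \circ \G$'' in Definition~\ref{def:relation_new} are unchanged when $\K$ is replaced by $\K'$, forcing $\releqcK = \releqc{\K'}$.

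I do not expect any real obstacle: the three clauses separate cleanly and each one reduces to a short application of a single postulate. The only subtlety worth flagging is the use of the empty belief base as a universal probe in (F2): it is a legitimate element of $\MC{P}_\mathrm{fin}(\MC{L})$, vacuously modelled by every interpretation, and (G2) applied to it precisely reveals the distinction between $\K$-models and $\K$-non-models. Were one to avoid $\emptyset$, one could instead invoke any $\G$ with $\I \models \G$ whose presence in $\K \cup \G$ still leaves $\I'$ outside (e.g.\ $\K$ itself if one also had $\I' \models \K$, which of course fails here) --- so the empty base is really the most robust choice given the generic nature of $\mathbb{L}$.
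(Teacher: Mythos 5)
Your proposal is correct and follows essentially the same route as the paper: (F1) and (F2) via (G2) plus the definition of $\releqcK$ / Lemma~\ref{lem:help}, and (F3) via (G4). The one noteworthy difference is in (F2), where you probe with $\G=\emptyset$ (which every interpretation models, so Lemma~\ref{lem:help}(c) applies cleanly); the paper instead invokes $\Mod{\K\circ\K}=\Mod{\K}$, which is terser but glosses over the fact that Lemma~\ref{lem:help} needs a common base modelled by both $\I$ and $\I'$ --- something $\K$ itself is not, since $\I'\not\models\K$ --- so your explicit choice of witness is actually the more careful rendering of the same argument.
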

\begin{proof}
We show satisfaction of the three conditions of faithfulness, (F1)--(F3).

(F1). Let $\I,\I'\in \Mod{\K}$. 
 By (G2) we obtain $ \Mod{\K\circ\K}=\Mod{\K} $. Using Lemma~\ref{lem:help} we have $\I' \releqcK \I$.
This implies $\I \not\relcK \I'$.

(F2). Let $\I\in \Mod{\K}$ and $\I'\not\in \Mod{\K}$.
We get $ \Mod{\K\circ\K}=\Mod{\K} $ from (G2). Then Lemma~\ref{lem:help} implies $\I \releqcK \I'$ and $\I' \not\releqcK \I$.

(F3). Let $\K \equiv \K'$ (i.e. $\Mod{\K} = \Mod{\K'}$).
From Definition~\ref{def:relation_new} and (G4) follows $ \releqcK = \releqc{\K'} $, i.e., $ \I_1 \releqcK \I_2 $ iff $ \I_1 \releqc{\K'} \I_2 $. 
\end{proof}

The previous lemmas can finally be put to use to show that the construction of $\releqc{\abst}$ according to Definition~\ref{def:relation_new} yields an assignment with the desired properties.

\begin{proposition}\label{lfa}
If %
$\circ$ 
satisfies (G1)--(G6), then $\releqc{\abst} $ is a min-friendly 
faithful assignment compatible with $\circ$.
\end{proposition}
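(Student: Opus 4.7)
The plan is to obtain the proposition essentially as a corollary by assembling the four preceding lemmas. Each of them isolates one of the required properties of $\releqc{\abst}$ under a subset of the postulates (G1)--(G6), and since we assume all six, every lemma applies directly.

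First, I would verify that $\releqc{\abst}$ as specified in Definition~\ref{def:relation_new} really qualifies as an assignment in the sense of Definition~\ref{def:faithful}. The only non-trivial requirement is totality of $\releqcK$ for each $\K \in \MC{P}_\mathrm{fin}(\MC{L})$, which is provided by Lemma~\ref{lem:totality} using (G5) and (G6). With this in place, $\releqc{\abst}$ is a well-defined assignment.

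Next, I would invoke Lemma~\ref{lem:minwell} (using (G1), (G3), (G5), (G6)) to conclude that $\releqcK$ is min-friendly for every belief base $\K$, so $\releqc{\abst}$ is a min-friendly assignment. Then Lemma~\ref{lem:faithfulness} (using (G2), (G4), (G5), (G6)) yields faithfulness of $\releqc{\abst}$, and Lemma~\ref{lem:compatibility} (using (G1), (G3), (G5), (G6)) establishes compatibility of $\circ$ with $\releqc{\abst}$.

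There is no substantive obstacle at this point: the technical difficulty has been distributed across the earlier lemmas, in particular the construction in Definition~\ref{def:relation_new} together with Lemma~\ref{lem:help}, which underlies all subsequent arguments. The proof of the proposition therefore amounts to a brief bookkeeping step noting that the joint hypotheses (G1)--(G6) activate all four lemmas simultaneously, whose conjunction is exactly the claim.
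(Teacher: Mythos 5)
Your proposal matches the paper's own proof exactly: both simply note that $\releqc{\abst}$ is a total (hence well-defined) assignment by Lemma~\ref{lem:totality}, min-friendly by Lemma~\ref{lem:minwell}, faithful by Lemma~\ref{lem:faithfulness}, and compatible with $\circ$ by Lemma~\ref{lem:compatibility}, all activated simultaneously under (G1)--(G6). The proof is correct and there is nothing to add.
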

\begin{proof}
Assume (G1)--(G6) are satisfied by $ \circ $. Then $\releqc{\abst} $ 
is an assignment since every $\releqK$ is total by Lemma~\ref{lem:totality};
it is min-friendly by Lemma~\ref{lem:minwell};
it is faithful by Lemma~\ref{lem:faithfulness}; and
it is compatible with $\circ$ by Lemma~\ref{lem:compatibility}.\qedhere
\end{proof}

\subsection{From Assignments to Postulates}

Now, it remains to show the ``if'' direction of Theorem~\ref{thm:representation_theorem}.

\begin{proposition}\label{lem:lfa2_new}
	If there exists a min-friendly faithful assignment $\releq{\abst}$ compatible with $\circ$, then $\circ$ satisfies (G1)--(G6).
\end{proposition}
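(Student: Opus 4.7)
\smallskip
\noindent
\textbf{Proof plan.} The plan is to verify (G1)--(G6) one by one, unfolding the compatibility equation $\Mod{\K\circ\G} = \min(\Mod{\G},\releqK)$ and exploiting the three ingredients we have at our disposal: totality of $\releqK$, the three faithfulness conditions (F1)--(F3), and the two halves of min-friendliness (min-completeness and min-retractivity). The postulates (G1), (G3), (G4), and (G5) will follow almost immediately and routinely: (G1) because minima of $\Mod{\G}$ sit inside $\Mod{\G}$; (G3) because min-completeness turns non-empty $\Mod{\G}$ into non-empty minima; (G4) because (F3) and $\Mod{\G_1}=\Mod{\G_2}$ make both sides of the minimum expression coincide; and (G5) because a minimum of the larger set $\Mod{\G_1}$ intersected with $\Mod{\G_2}$ is automatically a minimum of the smaller set $\Mod{\G_1\cup\G_2}=\Mod{\G_1}\cap\Mod{\G_2}$.

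For (G2), the key is to use (F1) and (F2) to show that under the assumption $\Mod{\K\cup\G}\neq\emptyset$, $\min(\Mod{\G},\releqK)$ equals $\Mod{\K\cup\G}$. Any $\I\in\Mod{\K\cup\G}$ beats every $\I''\in\Mod{\G}\setminus\Mod{\K}$ via (F2) and is tied with every other $\K$-model via (F1) and totality, hence is $\releqK$-minimal in $\Mod{\G}$; conversely, any $\I\in\min(\Mod{\G},\releqK)$ must be a $\K$-model, since otherwise (F2) applied to a witness $\I'\in\Mod{\K\cup\G}$ would yield $\I'\relK\I$, contradicting minimality of $\I$.

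The only nontrivial step is (G6), which is exactly where min-retractivity pays off. Assume $\Mod{(\K\ncirc\G_1)\ncup\G_2}\neq\emptyset$ and pick any $\I\in\min(\Mod{\G_1\cup\G_2},\releqK)=\Mod{\K\ncirc(\G_1\cup\G_2)}$; one needs $\I\in\Mod{\K\ncirc\G_1}\cap\Mod{\G_2}$. Membership in $\Mod{\G_2}$ is immediate. For the other conjunct, fix any witness $\I''\in\Mod{(\K\ncirc\G_1)\cup\G_2}$. Then $\I''\in\Mod{\G_1\cup\G_2}$, so by minimality of $\I$ we have $\I\releqK\I''$; moreover $\I''\in\min(\Mod{\G_1},\releqK)$ by compatibility, and $\I\in\Mod{\G_1}$. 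Min-retractivity applied inside $\Mod{\G_1}$ then hands us $\I\in\min(\Mod{\G_1},\releqK)=\Mod{\K\ncirc\G_1}$, as required.

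I expect (G6) to be the main obstacle because it is the one step that genuinely relies on the structural property isolated in Definition~\emph{min-retractive}: without transitivity, the bridging from a minimum in $\Mod{\G_1\cup\G_2}$ to a minimum in the larger set $\Mod{\G_1}$ cannot be performed by a chain argument, and min-retractivity is precisely the axiom designed to plug that gap.
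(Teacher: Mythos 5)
Your proposal is correct and follows essentially the same route as the paper's proof: (G1), (G3), (G4), (G5) by unfolding compatibility, (G2) via the faithfulness conditions, and (G6) as the one place where min-retractivity is invoked to transfer minimality from $\Mod{\G_1\cup\G_2}$ back to $\Mod{\G_1}$. The only cosmetic difference is that you spell out the (F1)/(F2) bookkeeping for (G2) in more detail than the paper, which states it as a chain of equalities.
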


\begin{proof}
	Let $\releq{\abst} : \K \mapsto \releqK $ be as described. We now show that $\circ$ satisfies all of (G1)--(G6).
	
	(G1). Let $\I\in \Mod{\K\circ\G}$. Since $\Mod{\K\circ\G} = \min(\Mod{\G},\releqK)$, we have that $\I\in \min(\Mod{\G},\releqK)$. Then, we also have that $\I\in \Mod{\G}$. Thus, we have that $\Mod{\K\circ\G}\subseteq \Mod{\G}$ as desired.
	
	(G2). Assume $\Mod{\K\cup\G} \neq \emptyset$. 
	By faithfulness, this implies $\Mod{\K}  = \min({\Omega},\releqK)$.
	Thus $ \Mod{\K\cup\G}=
	\min(\Mod{\K\cup\G},\releqK) =
	\min(\Mod{\G},\releqK)
	= \Mod{\K\circ\G} $.
	
	(G3). Assume \mbox{$\Mod{\G} \neq \emptyset$}. By min-completeness, we have $\min(\Mod{\G},\releqK)\neq\emptyset$. 
	Since $\Mod{\K\circ\G} = \min(\Mod{\G},\releqK)$ by compatibility, we obtain $\Mod{\K\circ\G} \neq \emptyset$.
	
	(G4). Suppose there exist $\K_1,\K_2,\G_1,\G_2 \in \MC{P}_\mathrm{fin}(\MC{L})$ with $\K_1 \equiv \K_2$ and $\G_1\equiv\G_2 $. Then, $\Mod{\K_1} = \Mod{\K_2}$ and $\Mod{\G_1} = \Mod{\G_2}$. From (F3), we conclude $\releq{\K_1} = \releq{\K_2}$.
	Now suppose that there exists $\I\in \min(\Mod{\G_1},\releq{\K_1})$ (consequently $\I\in \Mod{\K_1\circ{\G_1}}$). Then $\I\in \Mod{\G_1}$ and also $\I\in \Mod{\G_2}$. Therefore, 
	$\I\in \min(\Mod{\G_2},\releq{\K_2})$ (consequently $\I\in \Mod{\K_2\circ\G_2}$). Thus, $\Mod{\K_1\circ\G_1} \subseteq \Mod{\K_2\circ\G_2}$ holds. Inclusion in the other direction follows by symmetry. Therefore, we have $\K_1\circ\G_1 \equiv \K_2\circ{\G_2}$.
	
	(G5). If $(\K\circ\G_1)\cup \G_2$ is inconsistent, the postulate follows trivially. Now assume that $\Mod{\K\circ\G_1} \cap \Mod{\G_2}\neq\emptyset$, i.e., there is an $\I$ with $\I\in \Mod{\K\circ\G_1} \cap \Mod{\G_2}$.
	Since $\Mod{\K\circ\G_1} = \mbox{$\min(\Mod{\G_1},\releqK)$}$ by compatibility, we have that $\I\in \mbox{$\min(\Mod{\G_1},\releqK)$} \cap \Mod{\G_2}$.
	Suppose for a contradiction that $\I\not\in \Mod{\K\circ(\G_1 \cup\G_2)}$. Then,  $\I\not\in \min(\Mod{\G_1\cup\G_2},\releqK)$. 
	This contradicts $\I \in \min(\Mod{\G_1},\releqK)$.
	Therefore  $\I \in \Mod{\K\circ(\G_1 \cup\G_2)}$. Thus $\Mod{\K\circ\G_1} \cap \Mod{\G_2} \subseteq \Mod{\K\circ(\G_1 \cup\G_2)}$ as desired. 
	
	(G6). Let $ (\K\circ\G_1)\cup \G_2 \not= \emptyset$, thus $ \I' \in \Mod{(\K\circ\G_1)\cup \G_2} = \Mod{\K\circ\G_1} \cap \Mod{\G_2}$ for some $\I'$.
	By compatibility, we then obtain $  \I' \in \min( \Mod{\G_1 }, \releqK)$.
	Now consider an arbitrary $\I$ with $\I\in \Mod{\K\circ(\G_1 \cup \G_2)}$. 	
	By compatibility we obtain $ \I \in\min( \Mod{\G_1 \cup \G_2 } ,\releqK )$ and therefore, since $\I' \in \Mod{\G_1} \cap \Mod{\G_2} = \Mod{\G_1 \cup \G_2}$, we can conclude $\I \releqK \I'$. This and $  \I' \in \min( \Mod{\G_1 }, \releqK)$ imply $\I \in \min( \Mod{\G_1 }, \releqK)$ by min-retractivity. Hence every $\I\in \Mod{\K\circ(\G_1 \cup \G_2)}$ satisfies $\I \in \min( \Mod{\G_1 }, \releqK) = \Mod{\K \circ \G_1}$ but also $\I \in \Mod{\G_2}$, whence $\Mod{\K\circ(\G_1 \cup \G_2)} \subseteq \Mod{\K \circ \G_1} \cap \Mod{\G_2} = \Mod{(\K \circ \G_1) \cup \G_2}$ as desired. \qedhere

\end{proof}

The proof of Theorem \ref{thm:representation_theorem} follows from Proposition~\ref{lfa} and~\ref{lem:lfa2_new}.
 
\section{Abstract Representation Theorem}\label{sec:abstraact_rep}

Theorem~\ref{thm:representation_theorem} establishes the correspondence between operators and assignments under the assumption that $\circ$ is known to exist. Toward a full characterization, we provide an additional condition on assignments, capturing operator existence.

A \emph{semantic \basechange{} function} is a mapping $\mathfrak{R}:\MC{P}_\mathrm{fin}(\MC{L}) \times \MC{P}_\mathrm{fin}(\MC{L}) \to \MC{P}({\Omega})$.
A \basechange{} operator $\circ$ is said to \emph{implement} $\mathfrak{R}$ if for all $\K, \G \in \MC{P}_\mathrm{fin}(\MC{L})$ holds $\Mod{\K \circ \G} = \mathfrak{R} (\K,\G)$. An assignment $\releq{\abst}$ is said to \emph{represent} $\mathfrak{R}$ if $\min(\Mod{\G},\releqK)=\mathfrak{R} (\K,\G)$
for all $\K,\G \in \MC{P}_\mathrm{fin}(\MC{L})$.

For the existence of an operator, it will turn out to be essential that any minimal model set of a belief base obtained from an assignment corresponds to some belief base, a property which is formalized by the following notion.
	\begin{definition}[min-expressible]\label{def:minfinite} 
	Given a logic $\mathbb{L} \,{=}\, (\MC{L},{\Omega},\models)$, a binary relation $\preceq$ over $ {\Omega} $ is called \emph{min-expressible} if for each $\G\in \MC{P}_\mathrm{fin}(\MC{L})$ there exists a belief base $ \MC{B}_{\G,\preceq} \in \MC{P}_\mathrm{fin}(\MC{L}) $ such that $ \Mod{\MC{B}_{\G,\preceq}} \,{=}\, \min(\Mod{\G},\preceq)$.
	An assignment $\releq{\abst}$ will be called min-expressible, if for each $\K \in \MC{P}_\mathrm{fin}(\MC{L})$, $\releqK$ is min-expressible. 
	Given a min-expressible assignment $\releq{\abst}$, let $\circ_{\releq{\abst}}$  denote the \basechange{} operator defined by
	$\K \circ_{\releq{\abst}}\! \G \,{=}\, \MC{B}_{\G,\releqK}$.
\end{definition}

We find the following abstract relation between expressibility, assignments and operators.
\begin{theorem}\label{thm:minexpressible}
Let $\mathbb{L}$ be a logic and let $\mathfrak{R}$ be a semantic \basechange{} function for $\mathbb{L}$.
Then 
$\mathfrak{R}$ is implemented by a \basechange{} operator satisfying (G1)--(G6)
iff
$\mathfrak{R}$ is represented by a min-expressible and min-friendly faithful assignment.
\end{theorem}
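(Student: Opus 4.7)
The plan is to leverage Theorem~\ref{thm:representation_theorem} (already established via Propositions~\ref{lfa} and~\ref{lem:lfa2_new}) and use min-expressibility as the precise bridge between the semantic function $\mathfrak{R}$ and the existence of a syntactic operator $\circ$.

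For the forward direction ($\Rightarrow$), I assume $\mathfrak{R}$ is implemented by some base change operator $\circ$ satisfying (G1)--(G6). Proposition~\ref{lfa} then delivers the min-friendly faithful assignment $\releqc{\abst}$ compatible with $\circ$. Representation of $\mathfrak{R}$ is immediate: for all $\K,\G$ we have $\min(\Mod{\G},\releqcK) = \Mod{\K\circ\G} = \mathfrak{R}(\K,\G)$, combining compatibility with the fact that $\circ$ implements $\mathfrak{R}$. Min-expressibility is obtained for free: the belief base $\K\circ\G \in \MC{P}_\mathrm{fin}(\MC{L})$ itself serves as the required $\MC{B}_{\G,\releqcK}$, since $\Mod{\K\circ\G} = \min(\Mod{\G},\releqcK)$. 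So $\releqcK$ is min-expressible for every $\K$.

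For the backward direction ($\Leftarrow$), I assume a min-expressible, min-friendly, faithful assignment $\releq{\abst}$ representing $\mathfrak{R}$. Min-expressibility is precisely the property that licenses the definition of the induced operator $\circ_{\releq{\abst}}$ from Definition~\ref{def:minfinite}. This operator implements $\mathfrak{R}$, since $\Mod{\K \circ_{\releq{\abst}} \G} = \Mod{\MC{B}_{\G,\releqK}} = \min(\Mod{\G},\releqK) = \mathfrak{R}(\K,\G)$ by choice of $\MC{B}_{\G,\releqK}$ and representation. Moreover, this very equality shows that $\releq{\abst}$ is compatible (in the sense of the text) with $\circ_{\releq{\abst}}$. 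Applying Proposition~\ref{lem:lfa2_new} to this compatible, min-friendly, faithful assignment then yields that $\circ_{\releq{\abst}}$ satisfies (G1)--(G6).

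I don't anticipate a real obstacle here: all the heavy lifting sits in Propositions~\ref{lfa} and~\ref{lem:lfa2_new}. The only genuine conceptual point is recognizing that min-expressibility is exactly what is needed to turn a semantic description (a function into $\MC{P}({\Omega})$) into a syntactic one (a function into $\MC{P}_\mathrm{fin}(\MC{L})$), which is why it appears in the abstract representation result but not in Theorem~\ref{thm:representation_theorem}, where an operator is presupposed. The proof can therefore be written in just a few lines, essentially a citation of the two propositions plus the observation about the role of $\MC{B}_{\G,\releqK}$.
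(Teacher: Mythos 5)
Your proof is correct and follows essentially the same route as the paper's: the forward direction invokes Proposition~\ref{lfa} and observes that $\K\circ\G$ itself witnesses min-expressibility, while the backward direction uses $\circ_{\releq{\abst}}$ from Definition~\ref{def:minfinite} together with Proposition~\ref{lem:lfa2_new}. No substantive differences to report.
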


\begin{proof}
($\Rightarrow$) Let $\circ$ be the corresponding \basechange{} operator. Then, by Proposition~\ref{lfa}, the assignment $\releq{\abst}^\circ$ as given in Definition~\ref{def:relation_new} is min-friendly, faithful, and compatible with $\circ$, thus it represents $\mathfrak{R}$. As for min-expressibility, recall that, by compatibility, $\Mod{\K\circ\G} =$ $\min(\Mod{\G},\releqcK)$ for every $\G$. As $ \K\circ\G $ is a belief base, min-expressibility follows immediately.

($\Leftarrow$) Let $\releq{\abst}$ be the corresponding min-expressible assignment and $\circ_{\releq{\abst}}$ as provided in Definition~\ref{def:minfinite}. By construction, $\circ_{\releq{\abst}}$ is compatible with $\releq{\abst}$ and therefore implements $\mathfrak{R}$. Proposition~\ref{lem:lfa2_new} implies that $\circ_{\releq{\abst}}$ satisfies (G1)--(G6). \qedhere
\end{proof}

Some colleagues argue that revising bases instead of belief sets calls for syntax-dependence and therefore (G4) should be discarded \cite{KS_FermeHansson2018}. 
Without positioning ourselves in this matter, we would like to emphasize that our characterizations from \Cref{thm:representation_theorem} and \Cref{thm:minexpressible} can be easily adjusted to a more syntax-sensitive setting: a careful inspection of the proofs shows that the results remain valid upon dropping (G4) from the postulates and (F3) from the faithfulness definition. 
\section{Total Preorder Representability}\label{sec:logic_capture_hidden_cycles}
We identify those logics for which every revision operator is representable by a total preorder assignment.

\begin{definition}[total preorder representable]
	A \basechange\ operator $ \circ $ is called \emph{total preorder representable} 
if there is a  min-complete faithful preorder
assignment %
compatible with~$ \circ $. %
\end{definition}

The following setting, describing a relationship between belief bases, will turn out to be the one and only reason to prevent total preorder representability.

\begin{definition}[\uncovered]\label{def:uncovered_new}
Let $ \mathbb{L}=(\MC{L},{\Omega},\models)$ be a logic. Three bases $\G_{\!0}, \G_{\!1}, \G_{\!2} \nin \mathcal{P}_\mathrm{fin}(\MC{L})$ form a \emph{\uncovered} for $ \mathbb{L}$ if
there exist $\K,\G'_{\!0},\G'_{\!1},\G'_{\!2} \nin \mathcal{P}_\mathrm{fin}(\MC{L})$ such that%
\begin{itemize}\setlength{\itemsep}{0pt}%
	\item[]\label{def:uncovered_new:condII}\hspace{-4.4ex}(1)~~$ \Mod{\K\cup \G_{\!0}} \neeq \Mod{\K\cup \G_{\!1}} \neeq \Mod{\K\cup \G_{\!2}} \neeq \emptyset$ %
	\item[]\label{def:uncovered_new:condI}\hspace{-4.4ex}(2)~~$ \emptyset \nneq \Mod{\G'_{\!i}} \nnsubseteq {(\Mod{\G_{\!i}}\ncap\Mod{\G_{\!i\oplus 1}})} \nsetminus \Mod{\G_{\!i\oplus 2}} $ with $i\nin \{0,1,2\}$
\\ (where $\oplus$ is addition $\!\!\!\mod 3$)
	\item[]\label{def:uncovered_new:condIII}\hspace{-4.4ex}(3)~~for any $\G\!\nin \mathcal{P}_\mathrm{fin}(\MC{L})$ with $\Mod{\G'_{\!i} \!\ncup\! \G}\!\nneq\!\emptyset$ for all $ 0{\leq} i {\leq} 2 $ exists a $\G'\nin \mathcal{P}_\mathrm{fin}(\MC{L})$ with $\emptyset \nneq \Mod{\G'} \nnsubseteq \Mod{\G} \!\nsetminus\! (\Mod{\G_{\!0}}\ncup \Mod{\G_{\!1}} \ncup \Mod{\G_{\!2}})$. 
\end{itemize}
\end{definition}

We note that \Cref{def:uncovered_new} generalises a known example for non-total preorder representability in Horn logic \cite{KS_DelgrandePeppas2015,del_2018}.

\renewcommand{\sqsubseteq}{\mathrel{\mbox{$\leqslant\hspace{-10.6pt}\raisebox{0.86pt}{$\vartriangleleft$}$}}}
\renewcommand{\sqsubsetneq}{\vartriangleleft}

\begin{proposition}\label{prop:when_tranisitiveIF}
	If $ \mathbb{L}$ exhibits a \uncovered, then there is a \basechange\ operator $ \circ $ for $ \mathbb{L} $ satisfying (G1)--(G6) that is not total preorder representable.
\end{proposition}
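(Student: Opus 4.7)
My plan is to construct a concrete base change operator $\circ$ satisfying (G1)--(G6) whose behavior forces any compatible faithful assignment into a strict three-cycle, thereby precluding any total preorder representation. First, by condition~(2) of Definition~\ref{def:uncovered_new}, each $\Mod{\G'_i}$ is non-empty, so I fix witnesses $\omega_i \in \Mod{\G'_i}$ for $i \in \{0,1,2\}$, noting the cyclic membership pattern $\omega_i \in \Mod{\G_i} \cap \Mod{\G_{i \oplus 1}}$ with $\omega_i \notin \Mod{\G_{i \oplus 2}}$; this will be the target of the cycle.

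Next, I define $\circ$ by cases. For any pair $(\K', \G)$ with $\K' \not\equiv \K$, I use the drastic revision $\K' \circ \G := \K' \cup \G$ if $\Mod{\K' \cup \G} \neq \emptyset$, and $\K' \circ \G := \G$ otherwise, for which (G1)--(G6) are routine. For $(\K, \G)$, postulates (G1) and (G2) pin down $\K \circ \G := \G$ when $\Mod{\G} = \emptyset$ and $\K \circ \G := \K \cup \G$ when $\Mod{\K \cup \G} \neq \emptyset$. On the remaining inputs I introduce a signature $S(\G) := \{i : \Mod{\G \cup \G'_i} \neq \emptyset\}$: if $S(\G) = \{0,1,2\}$, condition~(3) furnishes a base $\G^{\flat}$ with $\emptyset \neq \Mod{\G^{\flat}} \subseteq \Mod{\G} \setminus (\Mod{\G_0} \cup \Mod{\G_1} \cup \Mod{\G_2})$, and I set $\K \circ \G := \G^{\flat}$, ensuring the result avoids all witnesses; otherwise $|S(\G)| \leq 2$, and I set $\K \circ \G := \G \cup \G'_i$ for a cyclically-chosen $i \in S(\G)$, deliberately installing the cyclic preference.

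I then verify (G1)--(G6): axioms (G1)--(G4) are immediate from the definition, while (G5) and (G6) require a case analysis tracing how $S(\G_1 \cup \G_2)$ interacts with $S(\G_1)$ and $S(\G_2)$, with condition~(3) providing the bases needed whenever the $|S(\cdot)|=3$ case arises in a cascade. To rule out preorder representability, I assume toward contradiction that a faithful min-complete preorder assignment $\releq{\abst}$ is compatible with $\circ$. I then exhibit bases $\G^{(i)}$ (constructed from unions of $\G_{i \oplus 1}$ and auxiliary bases from condition~(3), tailored to be $\K$-inconsistent yet contain the pertinent pair of witnesses) for which the tie-breaking rule yields $\K \circ \G^{(i)} = \G^{(i)} \cup \G'_i$; condition~(2) then guarantees $\omega_i \in \Mod{\K \circ \G^{(i)}}$ but excludes another witness from $\Mod{\K \circ \G^{(i)}}$. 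Compatibility and min-retractivity force strict comparisons between witnesses, which chain into a strict $3$-cycle $\omega_0 \relK \omega_1 \relK \omega_2 \relK \omega_0$, contradicting transitivity of $\releqK$.

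The main obstacle is the verification of (G5) and (G6) under the cyclic tie-breaking rule; one must show that the two paths $\K \circ (\G_1 \cup \G_2)$ and $(\K \circ \G_1) \cup \G_2$ yield compatible results across all possible signature transitions, which is combinatorially heavy but mechanical given the critical-loop machinery. Establishing the existence of the bases $\G^{(i)}$ in the non-representability argument requires similar care, leveraging the critical-loop conditions -- particularly~(3) -- to ensure these bases land in the intended branch of the case distinction.
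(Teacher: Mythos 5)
Your overall strategy is the same as the paper's: full meet revision for all $\K'\not\equiv\K$, a hand-crafted cyclic rule for $\K$ itself, and then a strict $3$-cycle $\I_0\rel{\K}\I_1\rel{\K}\I_2\rel{\K}\I_0$ extracted via compatibility and faithfulness. However, there is a genuine gap in the branch where $\G$ is consistent with all three $\G'_i$. You set $\K\circ\G:=\G^{\flat}$ for a base $\G^{\flat}$ that Condition~(3) merely asserts to \emph{exist} for the given $\G$; this choice is made independently for each input. That breaks (G5)/(G6): if $S(\G^*_1)=S(\G^*_1\cup\G^*_2)=\{0,1,2\}$ and $\Mod{\G^{\flat}_1}\cap\Mod{\G^*_2}\neq\emptyset$, the postulates force $\Mod{\K\circ(\G^*_1\cup\G^*_2)}=\Mod{\G^{\flat}_1}\cap\Mod{\G^*_2}$, but your independently chosen $\G^{\flat}_{12}$ has no reason to have exactly these models, and Condition~(3) does not guarantee that this particular model set is expressible by any base at all. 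Dismissing this as ``combinatorially heavy but mechanical'' hides the fact that it is precisely where an extra idea is needed. The paper's fix is to collect \emph{all} bases supplied by Condition~(3) into a family $\mathfrak{B}$, restrict to those inconsistent with $\K$, impose a well-order $\leqslant$ on the result (via the axiom of choice), and define the revision in this branch as $\G\cup\G_{\min}^{\mathfrak{B}'}$ with $\G_{\min}^{\mathfrak{B}'}$ the $\leqslant$-least member consistent with $\G$. This canonical selection is stable under conjunction — the least element consistent with $\G^*_1$ remains the least element consistent with $\G^*_1\cup\G^*_2$ whenever $(\K\circ\G^*_1)\cup\G^*_2$ is consistent — which is exactly what makes (G5)/(G6) go through.

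Two smaller points. Your case split ``otherwise $|S(\G)|\leq 2$, set $\K\circ\G:=\G\cup\G'_i$ for $i\in S(\G)$'' is undefined when $S(\G)=\emptyset$; you need the default clause $\K\circ\G:=\G$. And for the non-representability half, the auxiliary bases $\G^{(i)}$ you propose to construct are unnecessary: the critical-loop bases $\G_i$ themselves already do the job, since Condition~(1) makes each $\G_i$ inconsistent with $\K$ and the cyclic rule yields $\Mod{\K\circ\G_{i\oplus 1}}\subseteq\Mod{\G'_i}$, which contains a witness from $\Mod{\G'_i}$ and excludes the witness from $\Mod{\G'_{i\oplus 1}}$, giving the strict cycle directly.
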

\begin{proof}
	Let $ \G_0,\G_1,\G_{2}\in \mathcal{P}_\mathrm{fin}(\MC{L}) $ form a \uncovered\ and let $ \G'_0,\G'_1,\G'_{2}$ and $ \K $ as in Definition \ref{def:uncovered_new}.

\newcommand{\B}{\mathfrak{B}}

Let $ \B $ denote the set of all $ \G' $ guaranteed by Condition~(3) %
from Definition \ref{def:uncovered_new}, i.e. $ \G'\in \B $ if there is some $ \G $ with 
$ \emptyset\neq \Mod{\G'}\subseteq \Mod{\G}\setminus(\Mod{\G_0}\ncup\Mod{\G_1}\ncup\Mod{\G_2}) $
such that $\Mod{\G'_i}\ncap\Mod{\G}\nneq \emptyset $ for all $ i\in\{0,1,2\}$.
Now let $ \B'=\{ \G\in \B \mid \Mod{\G \cup \K}=\emptyset \} $, i.e., all belief bases from $ \B $ that are inconsistent with $ \K $.
Let $ \leqslant$ be an arbitrary linear order on $ \B' $ with respect to which every non-empty subset of $ \B' $ has a minimum.\footnote{Such a $ \leqslant $ exists due to the well-ordering theorem, by courtesy of the \emph{axiom of choice} \cite{KS_Vialar2017}.} 
We now define $\circ$ as follows: for every $\K' \not\equiv \K$ and any $\G$, let $ \K'\circ\G=\K'\cup\G $ if $ \K'\cup\G $ is consistent, otherwise  $ \K'\circ\G=\G $. 
For $\K$ (and any base equivalent to it), we define:
\begin{equation*}
			\K\circ\G = \begin{cases}
				\K \cup \G                                                          & \text{if } \Mod{\K \ncup \G} \neq \emptyset\text{, otherwise}                                                                \\
				\G \cup \G_{\min}^{\B'}                                             & \text{if }
				\Mod{\G' \ncup \G} \neq \emptyset \text{ for some } \G'\in \B',                                               \\ %
				\G \cup \G'_i                                                       & \text{if } \Mod{\G'_{\!i} \ncup \G} \neq \emptyset \text{ and }\Mod{\G'_{\!i\oplus 2} \cup \G} \,{=}\, \emptyset, \text{and} \\
				\G                                                                  & \text{if none of the above applies,}
			\end{cases}
		\end{equation*}
		where $ \G_{\min}^{\B'}\!=\!\min(\{\G' \nin \B' \mid \Mod{\G' \ncup \G} \nneq \emptyset\}, \leqslant) $.
The construction exploits that Condition (2) implies $\Mod{\G'_0\ncup\G'_1\ncup \G'_2}=\emptyset$ and by Condition (3) every base consistent with $ \G'_0 $, $ \G'_1 $ and $ \G'_2 $ has models outside of $ \Mod{\G_0}\ncup\Mod{\G_1}\ncup\Mod{\G_2} $.

We show that $\circ$ satisfies (G1)--(G6) .
For $ \K'\not\equiv\K $ we obtain a full meet revision  which is known to satisfy (G1)--(G6) \cite{KS_Hansson1999}.
Consider the remaining case of $ \K $ (and any equivalent base):

\emph{Postulates (G1)--(G4)}. The satisfaction of (G1)--(G3) follows direction from the construction of $ \circ $.
For (G4) observe that the case distinction above considers only models of $ \G $ when computing $ \K\circ\G $. Thus, for $ \G_1^*\equiv\G_2^* $ we always obtain $ \K\circ\G_1^*\equiv\K\circ\G_2^* $.

\emph{Postulate (G5) and (G6)}. 
Consider two belief bases $ \G^*_1 $ and $ \G^*_2 $.
If $ \G^*_2 $ is inconsistent  with $ \K\circ\G^*_1 $, then we obtain satisfaction of (G5) immediately.
For the remaining case of (G5) and (G6) we assume  $ \K\circ\G^*_1 $ to be consistent with $ \G^*_2 $.
The postulate (G1) implies that $ \G^*_1\cup \G^*_2 $ is consistent.
If $ \G^*_1 $ is consistent with $ \K $, then we obtain consistency of $ \G^*_2 \cup \K $. 
This implies $ \K\circ(\G^*_1\cup\G^*_2)\equiv(\K\circ\G^*_1)\cup \G^*_2 $; yielding satisfaction of (G5) and (G6).
For the case of consistency of $ \G^*_1 $ with some $ \G'\in \B' $, the set $ \G^*_1\cup\G^*_2 $ is also consistent with $ \G' $.
We obtain  $ (\G^*_1)_{\min}^{\B'}=(\G^*_1 \ncup \G^*_2)_{\min}^{\B'} $, hence $ \G^*_2 $ is consistent with $ \G' $ and for every $ \G'' \in \B' $ with $ \Mod{\G'' \ncup \G^*_1 \ncup \G^*_2}\neq \emptyset $ we also have $ \Mod{\G'' \ncup \G^*_1}\neq \emptyset $.
This yields $ \K\circ(\G^*_1\cup\G^*_2)\equiv(\K\circ\G^*_1)\cup \G^*_2 $, establishing (G5) and (G6) for this case.
If $ \G^*_1 $ is consistent with $ \G'_i $ and inconsistent with $ \G'_{i\oplus 2} $, then likewise $ \G^*_1\cup \G^*_2 $ is consistent with $ \G'_i $ and inconsistent with $ \G'_{i\oplus 2} $. 
Again we obtain $ \K\circ(\G^*_1\cup\G^*_2)\equiv(\K\circ\G^*_1)\cup \G^*_2 $.
If none of the conditions above applies to $ \G^*_1 $, then they also not apply to $ \G^*_1\ncup  \G^*_2 $.
From the construction of $ \circ $ we obtain $ \K\circ(\G^*_1\cup\G^*_2)\equiv(\K\circ\G^*_1)\cup \G^*_2 $.

It remains to show that $\circ$ is not total preorder representable. Towards a contradiction suppose there is a min-complete faithful preorder
assignment $ \releq{\abst} $ for $ \circ $.
By construction there are $ \I_i,\I_j\in\Omega $ with $ \I_i\models\K\circ\G_i $ and $ \I_j\not\models\K\circ\G_i $ for $ 0\leq i,j\leq 2 $ and $ i\neq j $. %
Obtain $ \Mod{\K\circ\G_i}=\min(\Mod{\G_i},\releqK) $ from the compatibility with $ \circ $. %
The definition of $ \circ $ yields $ \I_1\in {\min(\Mod{\G_1},\releqK)} $ %
and $ \I_2\models \G_1 $ and $ \I_2\notin {\min(\Mod{\G_1},\releqK)} $.
We obtain thereof the strict relation $ \I_1 \relK \I_2 $. %
The same argument applies to every pair of interpretations $ \I_i,\I_{i\oplus 1} $.
In summary, we get $ \I_0 \relK \I_1 \relK \I_2 \relK \I_0 $, which is impossible for a transitive relation.
\end{proof}

We call pairs of interpretations detached when the \basechange\ operator gives no hint about how to order them.
	\begin{definition}\label{def:detached}
	A pair $ (\I,\I')\in\Omega\times \Omega $ is called \emph{detached from $ \circ $ in $ \K $}, if $ \I,\I'\not \models\K\circ\G$ for all $ \G\in\MC{P}_\mathrm{fin}(\MC{L}) $. %
\end{definition}
Detached pairs will be helpful when proving the missing part of the correspondence between \uncovered\ and  total preorder representability.
In particular, violations of transitivity in $ \releqcK $ from Definition \ref{def:relation_new} always contain a detached pair.
\begin{lemma}\label{lem:loop_detach}
	Assume $ \mathbb{L} $ does not admit a \uncovered\ and $ \circ $ satisfies (G1)--(G6).
If $ \I_0 \releqcK \I_1 $ and $ \I_1 \releqcK \I_2 $ with $ \I_0 \not\releqcK \I_2 $, then $ (\I_0,\I_1) $ or $ (\I_1,\I_2) $ is detached from $ \circ $ in $ \K $.
\end{lemma}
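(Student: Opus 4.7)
The plan is to prove the contrapositive: assume that neither $(\I_0,\I_1)$ nor $(\I_1,\I_2)$ is detached from $\circ$ in $\K$, and derive the existence of a critical loop for $\mathbb{L}$, contradicting the hypothesis. The starting point is to apply Lemma~\ref{lem:help}(a) to $\I_0 \not\releqcK \I_2$, producing a base $\G^{\star}$ with $\I_0,\I_2 \models \G^{\star}$, $\I_2 \models \K\circ\G^{\star}$, and $\I_0 \not\models \K\circ\G^{\star}$. An immediate but crucial observation I would record is that $\I_1 \not\models \G^{\star}$: otherwise $\I_1 \releqcK \I_2$ would force $\I_1 \models \K\circ\G^{\star}$ (taking $\G^{\star}$ as the witness in the universal quantification of Definition~\ref{def:relation_new}), and then $\I_0 \releqcK \I_1$ applied to the same witness would force the impossible $\I_0 \models \K\circ\G^{\star}$.

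Next, non-detachment of each pair supplies bases $\G^{a}$ and $\G^{b}$ realising at least one of $\{\I_0,\I_1\}$ and at least one of $\{\I_1,\I_2\}$, respectively, as a model of the corresponding revision; by (G1), that interpretation also models the corresponding base. A case split on which interpretation is realised, combined with unioning the witnesses with $\G^{\star}$ and invoking (G5) and (G6) to transfer revision outcomes between combined bases, would yield three bases $\G_0,\G_1,\G_2$ whose revisions by $\K$ exhibit the cyclic minimality pattern $\I_1 \in \Mod{\K\circ\G_0} \not\ni \I_0,\I_2$; $\I_2 \in \Mod{\K\circ\G_1} \not\ni \I_0,\I_1$; $\I_0 \in \Mod{\K\circ\G_2} \not\ni \I_1,\I_2$. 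The relations $\I_0 \releqcK \I_1$ and $\I_1 \releqcK \I_2$, taken via the contrapositive of Lemma~\ref{lem:help}(c), are the main tool for ruling out unwanted realisations and concentrating matters into this canonical cycle; the awkward subcase in which both non-detachment witnesses realise only the outer interpretations $\I_0$ and $\I_2$ is handled by unioning them with $\G^{\star}$ and reducing to a previous case.

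Finally, setting $\G'_{\!i} := \K \circ \G_i$, the three clauses of Definition~\ref{def:uncovered_new} remain to be checked. Clauses (1) and (2) should follow essentially by construction: (G2) and (G3) give non-emptiness and (via the cyclic pattern above) distinctness of the $\Mod{\K \cup \G_i}$; and (G1) together with the cycle yields $\Mod{\G'_{\!i}} \subseteq (\Mod{\G_i} \cap \Mod{\G_{i\oplus 1}}) \setminus \Mod{\G_{i\oplus 2}}$. The main obstacle is clause (3): for every $\G$ consistent with all three $\G'_{\!i}$, one must exhibit a base $\G'$ with $\emptyset \neq \Mod{\G'} \subseteq \Mod{\G} \setminus (\Mod{\G_0} \cup \Mod{\G_1} \cup \Mod{\G_2})$. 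My intended route is to take $\G' := \K \circ (\G_j \cup \G)$ for a judiciously chosen $j$: (G3) gives non-emptiness, (G1) places $\Mod{\G'}$ inside $\Mod{\G}$, and a contradiction argument using (G5) and (G6) shows $\Mod{\G'}$ cannot be entirely inside any single $\Mod{\G_i}$ without collapsing the cyclic realisation pattern established in the previous step. Selecting $j$ coherently across all candidates for $\G$ is where the subtle bookkeeping lies, and this is the step where I expect to spend the most effort.
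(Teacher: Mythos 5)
Your overall strategy --- assuming neither pair is detached and manufacturing a critical loop from the witnesses supplied by Lemma~\ref{lem:help}(a) and by non-detachment --- matches the paper's, and the first half of your plan is sound (including the useful observation that $\I_1 \not\models \G^{\star}$). The second half, however, contains two genuine errors. First, you have misread Condition~(1) of Definition~\ref{def:uncovered_new}: it requires $\Mod{\K\cup\G_0}=\Mod{\K\cup\G_1}=\Mod{\K\cup\G_2}=\emptyset$, i.e.\ that $\K$ be \emph{inconsistent} with each $\G_i$, not that these sets be non-empty and distinct. This does not follow ``by construction'' and needs its own argument: if $\K$ were consistent with some $\G_i$, then (G2) would make $\I_i$ a model of $\K$, and since each interpretation in the cycle shares a witnessing base with the next, iterating (G2) forces all of $\I_0,\I_1,\I_2$ to model $\K$, which contradicts $\I_0\not\releqcK\I_2$ via faithfulness (F1).

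Second, your candidate witness for Clause~(3) cannot work: by (G1) every model of $\K\circ(\G_j\cup\G)$ lies in $\Mod{\G_j}$, whereas Clause~(3) demands a $\G'$ whose models are disjoint from \emph{all of} $\Mod{\G_0}\cup\Mod{\G_1}\cup\Mod{\G_2}$ (your stated target, that $\Mod{\G'}$ not be contained in any single $\Mod{\G_i}$, is in any case the wrong condition). The workable choice is $\G':=\K\circ\G$ itself, argued by dichotomy: either $\Mod{\K\circ\G}$ avoids all three $\Mod{\G_i}$, in which case it is the required witness (non-empty by (G3), inside $\Mod{\G}$ by (G1)); or it meets some $\Mod{\G_i}$, in which case compatibility and min-retractivity of $\releqcK$ put $\I_i$ into $\min(\Mod{\G},\releqcK)$ and then propagate minimality step by step around the three-cycle, ending in $\I_0\releqcK\I_2$ and contradicting the hypothesis. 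This min-retractivity propagation is the heart of the paper's argument and is entirely absent from your plan.
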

\begin{proof}
	Towards a contradiction, assume a violation of transitivity, where $ (\I_0,\I_1) $ and $ (\I_1,\I_2) $ are not detached from $ \circ $ in $ \K $.	
	By Lemma \ref{lem:totality} the relation $ \relcK $ is total, and thus we have that $ \I_2 \relcK \I_0 $.	
	Then, due to Lemma~\ref{lem:help}, there exist $ \G_0,\G_1\in\MC{P}_\mathrm{fin}(\MC{L}) $ satisfying $\I_0,\I_1\models\G_0$ and $ \I_0 \models\K\circ\G_0 $ as well as $ \I_1,\I_2\models\G_1 $ and $ \I_1 \models\K\circ\G_1$.
	Moreover, there is a $ \G_2 $ with $ \I_0,\I_2\models\G_2 $ such that $ \I_2 \models\K\circ\G_2 $ and $ \I_0\not\models\K\circ\G_2 $. 
	
	Consider Conditions (1) and (2) %
		from Definition \ref{def:uncovered_new}.
	
\textit{Condition (1).} Assume that $ \K $ is consistent with some $ \G_i $. 
By faithfulness and min-retraction we also have $ \I_i\models \K  $. 
By employing (G2) we obtain $ \I_{i\oplus 2} \models \K $ from $ \I_{i\oplus 2} \models \K\circ\G_{i\oplus 2} $.
By an analogue argumentation we obtain $ \I_0,\I_1,\I_2\models \K $, which yields in consequence a contradiction to $ \I_2 \relcK \I_0 $.

	\textit{Condition (2).} 
We show that $ \K\circ\G_i $ is a consistent belief base with $ \Mod{\K\circ\G_i}\nnsubseteq\Mod{\G_i\ncup\G_{i\oplus 1}}\setminus\Mod{\G_{i\oplus 2}} $ for each $ 0\leq i\leq 2 $.
If $ \G_0\ncup\G_1\ncup\G_2 $ is inconsistent, this is immediate.
If $ \G_0\ncup\G_1\ncup\G_2 $ is consistent, then there exists some $ \I_3\models\G_0\ncup\G_1\ncup\G_2 $ and thus, $ \I_i \releqcK \I_3 $ for $ 0\leq i\leq 2 $. 
If $ \I_3 \releqcK \I_i $ for some $ 0\leq i\leq 2 $, then obtain the contradiction $ \I_i\models\K\circ\G_2 $ by employing min-retraction. 
Therefore, we obtain $ \I_i \relcK \I_3 $ and $ \I_3\not\models \K\circ\G_i $ for all $ 0\leq i\leq 2 $.
As consequence, $ \K\circ\G_i $ is a belief base with $ \Mod{\K\circ\G_i}\nnsubseteq\Mod{\G_i\ncup\G_{i\oplus 1}}\setminus\Mod{\G_{i\oplus 2}} $.
Consistency is given by consistency of all $ \G_i $ and (G3).

As $ \G_0,\G_1,\G_2 $ form no \uncovered{} by assumption, yet Con\-ditions (1) and (2) hold, Condition (3) of Definition \ref{def:uncovered_new} must be violated by some $ \G $ with 
$ \Mod{\K{\circ}\G_i}\ncap\Mod{\G}\nneq \emptyset $ for every $ i\in\{0,1,2\} $ such that
for all $ \G' $ with $ \Mod{\Gamma'} {\subseteq} \Mod{\Gamma}\setminus (\Mod{\G_0}\ncup\Mod{\G_1}\ncup \Mod{\G_2})  $ we have $ \Mod{\G'}{=}\emptyset$.
Since $ \Mod{\K\circ\G} $ is a consistent belief base, we obtain $ \Mod{\K\circ\G} \ncap \Mod{\G_i} \neq \emptyset $ for some $ i\in\{0,1,2\} $.
From  min-retractivity  of $ \releqcK $ and $ \Mod{\K\circ\G}=\min(\Mod{\G},\releqcK) $ we obtain  as consequence $ \I_{i}\in{\min(\Mod{\G},\releqcK)} $.
Employing $ \I_i,\I_{i\oplus 2}\in \Mod{\G_{i\oplus 2}} $, min-retractivity and $ \Mod{\G_{i\oplus 2}} \subseteq \Mod{\G} $ we get $ \I_{i\oplus 2}\in{\min(\Mod{\G},\releqcK)}  $. 	
By an iterative application of the same argument we observe the contradiction $ \I_0\releqcK \I_2 $.
\end{proof}

Lemma \ref{lem:loop_detach} allows us to   
 complete the correspondence between \uncovered s and total preorder representability.
\begin{theorem}\label{thm:when_tranisitive} A logic $ \mathbb{L}= {(\MC{L},{\Omega},\models)}$ 
does not admit a \uncovered{} if and only if
every \basechange\ operator for $ \mathbb{L} $ satisfying (G1)--(G6) is total preorder representable. 
\end{theorem}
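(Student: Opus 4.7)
The plan is to handle the two directions separately, with the $(\Leftarrow)$ direction being essentially free: it is the contrapositive of Proposition~\ref{prop:when_tranisitiveIF}, since a critical loop produces an operator satisfying (G1)--(G6) that is not total preorder representable.

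For the harder $(\Rightarrow)$ direction, I would start from an arbitrary base change operator $\circ$ satisfying (G1)--(G6) and use Proposition~\ref{lfa} to obtain the canonical min-friendly faithful assignment $\releqc{\abst}$ from Definition~\ref{def:relation_new}. The only property possibly missing is transitivity, and the core idea is to transform $\releqc{\abst}$ into a genuine preorder assignment with the same behaviour. The crucial leverage comes from Lemma~\ref{lem:loop_detach}: under the no-critical-loop assumption, any triple violating transitivity of $\releqcK$ must contain a detached pair, i.e., a pair $(\I,\I')$ where neither $\I$ nor $\I'$ is a model of $\K\circ\G$ for any $\G$.

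Concretely, for every belief base $\K$, I would let $N_\K = \{\I\in\Omega\mid \I\models \K\circ\G \text{ for some } \G\in\MC{P}_\mathrm{fin}(\MC{L})\}$ be the set of interpretations that are \emph{relevant} for $\K$. Since a detached pair requires both components to lie outside $N_\K$, the restriction of $\releqcK$ to $N_\K$ has no transitivity violations by Lemma~\ref{lem:loop_detach}, and is hence a total preorder there. I then define the new assignment $\releqK^*$ as follows: on $N_\K\times N_\K$ it agrees with $\releqcK$; between $N_\K$ and $\Omega\setminus N_\K$ all $N_\K$-elements are strictly below all non-$N_\K$ ones; and on $(\Omega\setminus N_\K)\times(\Omega\setminus N_\K)$ every pair is mutually related (one equivalence class at the top). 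A routine case analysis confirms that $\releqK^*$ is a total preorder.

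It remains to verify that $\releqK^*$ inherits faithfulness, min-completeness, and compatibility with $\circ$. Faithfulness (F1)--(F2) use that $\Mod{\K}=\Mod{\K\circ\K}\subseteq N_\K$ by (G2), reducing these conditions to the faithfulness of $\releqc{\abst}$; (F3) follows because (G4) ensures $N_\K=N_{\K'}$ when $\K\equiv \K'$. For compatibility, the inclusion $\Mod{\K\circ\G}\subseteq\min(\Mod{\G},\releqK^*)$ follows from compatibility of $\releqc{\abst}$ together with the fact that $\Mod{\K\circ\G}\subseteq N_\K$. The converse inclusion uses the same containment to observe that any $\I\in \min(\Mod{\G},\releqK^*)$ must lie in $N_\K$ (otherwise it would be strictly above any $\I^*\in\Mod{\K\circ\G}$), and then min-retractivity of $\releqcK$ together with $\I\releqcK\I^*$ promotes $\I$ into $\Mod{\K\circ\G}$. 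Min-completeness is then a direct consequence of (G3) and this compatibility.

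The main obstacle is showing that the restriction of $\releqcK$ to $N_\K$ is genuinely transitive and that pushing the non-relevant interpretations to the top does not spoil compatibility; both hinge on correctly exploiting Lemma~\ref{lem:loop_detach} and the precise shape of the definition of detached pair (symmetric in both components). Everything else reduces to bookkeeping.
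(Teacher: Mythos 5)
Your proof is correct, and its second half takes a genuinely different route from the paper. Both proofs dispatch the $(\Leftarrow)$ direction via Proposition~\ref{prop:when_tranisitiveIF} and both hinge on Lemma~\ref{lem:loop_detach} for $(\Rightarrow)$, but from there the constructions diverge. The paper first deletes \emph{all} detached pairs from $\releqcK$, argues via Lemma~\ref{lem:loop_detach} that the resulting (no longer total) relation is transitive and has unchanged minima, and then invokes an order-extension lemma of Hansson (and hence the axiom of choice) to recover a total preorder. You instead observe that a detached pair has \emph{both} components outside the set $N_\K$ of worlds ever realized as models of some $\K\circ\G$, so Lemma~\ref{lem:loop_detach} already forces $\releqcK$ to be transitive on $N_\K\times N_\K$; you then complete this to a total preorder explicitly by placing $\Omega\setminus N_\K$ as a single equivalence class strictly above $N_\K$. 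Your verification of the preorder property, of compatibility (using that $\Mod{\K\circ\G}\subseteq N_\K$, that non-$N_\K$ worlds can never be minimal for consistent $\G$, and min-retractivity of $\releqcK$ for the $\supseteq$ inclusion), and of (F1)--(F3) (using (G2) to place $\Mod{\K}$ inside $N_\K$ and (G4) for $N_\K=N_{\K'}$) all go through. What your approach buys is an explicit, choice-free construction of the witnessing preorder; what the paper's approach buys is that the extension step is delegated wholesale to a known order-extension result rather than argued by hand, at the cost of invoking the axiom of choice a second time.
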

\begin{proof}
One direction is given by Proposition \ref{prop:when_tranisitiveIF}.
For the other direction, 
assume $ \mathbb{L} $ does not admit a \uncovered{}
and let $ \circ $ be a  \basechange\ operator $ \circ $ satisfying (G1)--(G6).
We will obtain a min-friendly preorder assignment $ \releq{\abst} $ from $ \releqc{\abst} $ in two steps.
Let $ \K $ be an arbitrary belief base $ \K $.
First, let $ \preceq_1 $ be the relation  defined by $ \preceq_1 \,=\, \releqcK\!\! {\setminus} \mathfrak{D} $, where $ \mathfrak{D} $ denotes the set of all $ (\I,\I')\in\Omega\times\Omega $ which are detached from $ \circ $ by $ \K $.
By Lemma \ref{lem:loop_detach}, the relation $ \preceq_1 $ is transitive. 
Since the removal of a detached pair does not influence the minimality of interpretations with respect to a belief base, we have $ {\min(\G,\preceq_1)}={\min(\G,\releqcK)} $ for all  $ \G\in\MC{P}_\mathrm{fin}(\MC{L}) $.
Second, the relation $ \preceq_1 $ can be order extended to a total preorder $ \preceq_2 $ such that $ \preceq_1\subseteq \preceq_2 $ and $ \I_1 \prec_1 \I_2 $ implies $ \I_1 \prec_2 \I_2 $ \cite[Lem. 3]{KS_Hansson1968}.\footnote{Once more, to this end, we have to assume the \emph{axiom of choice}.}
	Choose $ \releqK $ to be exactly $ \preceq_2 $.
The relation $ \releqK $ is a total preorder where the minimality is preserved, i.e. $ {\min(\G,\releqK)}={\min(\G,\releqcK)} $. Moreover, $ \releqK $ inherits min-friendliness from $ \releqcK $.
Thus, $ \circ $ is total preorder representable.  \qedhere
\end{proof}

We close this section with an implication of Theorem \ref{thm:when_tranisitive}.
A logic $\mathbb{L} \,{=}\, (\MC{L},{\Omega},\models)$ is called \emph{disjunctive}, if for every two bases $\G_1,\G_2 \in \MC{P}_\mathrm{fin}(\MC{L}) $ there is a base $ \G_1{\vee}\G_2 \in\MC{P}_\mathrm{fin}(\MC{L})$ such that $ \Mod{\G_1{\vee}\G_2}=\Mod{\G_1}\ncup\Mod{\G_2}$.
This includes the case of any logic allowing for disjunction on the sentence level, i.e., when for every $\gamma,\delta \in \MC{L}$ exists some $\gamma \vee \delta \in \MC{L}$ such that $\Mod{\gamma \vee \delta} = \Mod{\gamma} \cup \Mod{\delta}$, because then  
$\G_1{\vee}\G_2$ can be obtained as $\{\gamma \vee \delta \mid \gamma \in \G_1, \delta \in \G_2\}$.

\begin{corollary}
In a disjunctive logic, every belief change operator satisfying (G1)--(G6) is total preorder representable.
\end{corollary}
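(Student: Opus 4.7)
The plan is to reduce the corollary to Theorem~\ref{thm:when_tranisitive}: it suffices to show that no disjunctive logic admits a critical loop, because then every \basechange\ operator for $\mathbb{L}$ satisfying (G1)--(G6) is automatically total preorder representable.

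So I would argue by contradiction. Suppose $\mathbb{L}$ is disjunctive yet admits a critical loop, witnessed by bases $\G_0, \G_1, \G_2$ together with $\K, \G'_0, \G'_1, \G'_2$ as in Definition~\ref{def:uncovered_new}. The key step is to exploit Condition~(3) with a cleverly chosen $\G$. Namely, using disjunctivity, form $\G := \G_0 \vee \G_1 \vee \G_2$, which satisfies $\Mod{\G} = \Mod{\G_0} \cup \Mod{\G_1} \cup \Mod{\G_2}$. By Condition~(2), each $\G'_i$ is consistent with $\emptyset \neq \Mod{\G'_i} \subseteq \Mod{\G_i} \subseteq \Mod{\G}$, so $\Mod{\G'_i \cup \G} = \Mod{\G'_i} \neq \emptyset$ for every $i \in \{0,1,2\}$. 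Hence the antecedent of Condition~(3) is triggered and there must exist a base $\G'$ with $\emptyset \neq \Mod{\G'} \subseteq \Mod{\G} \setminus (\Mod{\G_0} \cup \Mod{\G_1} \cup \Mod{\G_2})$. But by the very choice of $\G$, the set on the right-hand side is empty, contradicting $\Mod{\G'} \neq \emptyset$.

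This shows that $\mathbb{L}$ admits no critical loop, so by Theorem~\ref{thm:when_tranisitive} every \basechange\ operator on $\mathbb{L}$ satisfying (G1)--(G6) is total preorder representable, which is exactly the claim. I do not anticipate any real obstacle: once disjunction is used to pick $\G$, the bases $\G_0, \G_1, \G_2$ already cover all of $\Mod{\G}$, so the witness $\G'$ demanded by Condition~(3) cannot possibly exist. The only mildly subtle point is making sure the premise of Condition~(3) genuinely fires, for which the containment $\Mod{\G'_i} \subseteq \Mod{\G_i}$ from Condition~(2) is enough.
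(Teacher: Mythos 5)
Your proof is correct and follows essentially the same route as the paper: the paper's own (one-line) argument is precisely that a disjunctive logic cannot admit a critical loop because Condition~(3) fails for $\G = \G_0 \vee \G_1 \vee \G_2$, which you have simply spelled out in full detail. The details you supply (that Condition~(2) guarantees $\Mod{\G'_i} \subseteq \Mod{\G_i} \subseteq \Mod{\G}$, so the antecedent of Condition~(3) fires, while its consequent demands a consistent base whose models lie in the empty set) are exactly the right ones.
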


\begin{proof}
A disjunctive logic never exhibits a critical loop; Condition (3) would be violated by picking $\G = \G_0 {\vee} \G_1 {\vee} \G_2$. 
\end{proof}

\section{Related Work}\label{sec:related_works}
We are aware of 
two closely related approaches for revising belief bases (or sets) in settings beyond propositional logic, both proposing model-based frameworks for belief revision without fixing a particular logic or the internal structure of interpretations, and characterizing revision operators via minimal models à la KM with some additional assumptions.

Delgrande et al. \shortcite{del_2018}  add additional restrictions both for the   interpretations (aka possible worlds) as well as for the postulates. 
On the interpretation side, unlike us, they restrict their number to be finite. Also they impose a constraint called \emph{regularity} which serves the very same purpose on their preorders as min-expressibility serves on our total relations.
As for the postulates, they extend the basic AGM postulates with a new one, called \emph{(Acyc)}, with the goal to exclude cyclic preference situations (our ``critical loops''). 
Yet, by imposing this postulate, they 
rule out some %
cases of AGM belief revision that we can cover with our framework, which works with (and characterizes) the pristine AGM postulates.

Aiguier et al. \shortcite{aiguier_2018} consider AGM-like belief base revision with possibly infinite sets of interpretations. 
Moreover, like us, they argue in favor of dropping the requirement that assignments have to yield preorders.
However, they rule out (KM4)/(G4) from the postulates, thus immediately restricting attention to the syntax-dependent case. Also, alike Delgrande et al.'s, their characterization imposes an additional postulate.}
On another note, Aiguier et al.~consider some bases, that actually \emph{do} have models, as inconsistent (and thus in need of revision), which in our view is at odds with the foundational assumptions of belief revision.

\section{Conclusion}\label{sec:conclusion}
We presented a characterization of AGM belief base revision in terms of preference assignments, adapting the approach by KM. Contrary to prior work, our result requires no adjustment of the AGM postulates themselves and yet applies to arbitrary monotonic logics with possibly infinite model sets. While we need to allow for non-transitive preference relations, we also precisely identify the logics where the preference relations can be guaranteed to be preorders as in the original KM result. In particular, this holds for all logics featuring disjunction.

As one of the avenues for future work, we will consider iterated revision. 
To this end, our aim is to advance the line of research by Darwiche and Pearl \cite{KS_DarwichePearl1997} to more general logics.

Finally, we will also be working on concrete realisations of the approach presented here in popular KR formalisms such as ontology languages.

\bibliographystyle{abbrv}
\bibliography{bibexport}

\clearpage
\appendix 
\section*{Supplement}

\maketitle
\setcounter{theorem}{14}

In addition to our paper, we provide in the following some supplementary example and remarks for the reviewer.
We start by presenting a running example.

\begin{example}[based on \cite{del_2018}]\label{ex:logicEX}
	Let $ \mathbb{L}_\mathrm{Ex} = (\MC{L}_\mathrm{Ex},\Omega_\mathrm{Ex},\models_\mathrm{Ex}) $ be the logic defined by $ \MC{L}_\mathrm{Ex}=\{\psi_0,\ldots,\psi_5,\varphi_0, \ldots,\varphi_4 \} $ and $ \Omega_\mathrm{Ex}=\{ \omega_0,\ldots,\omega_5 \} $, with the models relation $ \models_\mathrm{Ex} $ implicitly given by:
	\begin{align*}
		\Mod{\psi_i} & = \{ \omega_i \}  \\
		\Mod{\varphi_0}& = \{ \omega_0,\ldots,\omega_3 \} \\
		\Mod{\varphi_1} & = \{ \omega_1,\omega_2 \} \\
		\Mod{\varphi_2} & = \{ \omega_2,\omega_3 \} \\
		\Mod{\varphi_3} & = \{ \omega_3,\omega_1 \} \\
		\Mod{\varphi_4} & = \{ \omega_1,\ldots,\omega_5 \} 
	\end{align*}
Since defined in the classical model-theoretic way, $ \mathbb{L}_\mathrm{Ex} $ is a monotonic logic.
\end{example}
Note that logic $ \mathbb{L}_\mathrm{Ex} $ has no connectives.
However, this is just for illustrative purposes and we want to highlight that Example \ref{ex:logicEX} is an extension of an example given by Delgrande et. al \cite{del_2018}, which is known to be implementable in Horn logic~\cite{KS_DelgrandePeppas2015}. 

As next step, we provide a \basechange\ operator for $ \mathbb{L}_\mathrm{Ex} $.

\begin{example}[continuation of Example \ref{ex:logicEX}]\label{ex:orderByEx1}
Let $ \K = \{ \psi_0 \} $ and let $ \circ_\mathrm{Ex} $ be the \basechange\ operator defined as follows:
\begin{equation*}
	\K\circ_\mathrm{Ex}\G = \begin{cases}
		\K \ncup \G            & \hspace{-1ex}\text{if } \Mod{\K \ncup \G} \nneq \emptyset\text{, otherwise}                                                                \\
		\G \ncup \{\psi_4\}    & \hspace{-1ex}\text{if } \Mod{\{\psi_4\} \ncup \G} \nneq \emptyset,                                               \\ 
		\G \ncup \{ \psi_1 \}  & \hspace{-1ex}\text{if } \Mod{\{ \psi_1 \} \ncup \G} \nneq \emptyset \text{ and }\Mod{\{\psi_3\} \ncup \G} \,{=}\, \emptyset,  \\
		\G \ncup \{ \psi_2 \}  & \hspace{-1ex}\text{if } \Mod{\{ \psi_2 \} \ncup \G} \nneq \emptyset \text{ and }\Mod{\{\psi_1\} \ncup \G} \,{=}\, \emptyset,  \\
		\G \ncup \{ \psi_3 \}  & \hspace{-1ex}\text{if } \Mod{\{ \psi_3 \} \ncup \G} \nneq \emptyset \text{ and }\Mod{\{\psi_2\} \ncup \G} \,{=}\, \emptyset,  \\
		\G                    & \hspace{-1ex}\text{if none of the above applies,}
	\end{cases}
\end{equation*}
For all  $ \K' $ with $ \K'\equiv\K $ we define $ \K'\circ\G = \K\circ\G $ and for all $ \K' $ with $ \K'\not\equiv\K $ we define
\begin{equation*}
	\K'\circ\G=\begin{cases}
		\K'\cup \G & \text{ if } \K'\cup \G \text{ consistent} \\
		\G & \text{ otherwise.}
	\end{cases}
\end{equation*}
\end{example}

For all $ \K' $ with $ \K'\not\equiv\K $, there is no violation of the postulates (G1)--(G6) since we obtain a full meet revision known to satisfy (G1)--(G6) \cite{KS_Hansson1999}.
It remains to verify satisfaction of (G1)--(G6) for the case of $ \K'\equiv\K $.
Therefore,  we will provide a relation over $ \Omega_\mathrm{Ex} $ and make then use of Theorem \ref{thm:minexpressible} to show satisfaction of (G1)--(G6).
Before continuing with our running example  in the next section, we give some remarks beforehand on the generic technique our paper provides for obtaining such a relation.

\subsection*{On Encoding an Operator into a Preference Relation}

One of the main contributions of our article is a novel way of obtaining a preference relation from an operator. The established way of encoding a revision operator is provided by Katsuno and Mendelzon \cite{kat_1991} as follows:
\begin{equation}\label{eq:km_encoding}
	\I_1 \leq_\K \I_2 \text{ if } \I_1 \models \K \text{ or } \I_1 \models \K\circ \mathit{form}(\I_1,\I_2)
\end{equation}
where $ \mathit{form}(\I_1,\I_2)\in \MC{L} $ denotes a formula with $ \Mod{\mathit{form}(\I_1,\I_2)}=\{ \I_1,\I_2 \} $. 
The problem in a general logical setting is that there might be no such formula $ \mathit{form}(\I_1,\I_2) $ with $ \Mod{\mathit{form}(\I_1,\I_2)}=\{ \I_1,\I_2 \} $ in the considered logic.
This generalises to the non-existence of such belief bases.
In particular, this is the case in our running example (cf. Example \ref{ex:logicEX}).

Clearly, we are not the first to address this problem. 
Delgrande, Peppas and Woltran \cite{del_2018} solve this problem by simultaneously revising with all formulae satisfying $ \I_1 $ and $ \I_2 $, in order to ``simulate'' the revision by the desired formula $ form(\I_1,\I_2) $.
Aiguier, Atif, Block and Hudelot \cite{aiguier_2018} use a similar approach by revising with all formulae at once. 
In summary, neither Aiguier et. al nor Delgrande et al. use a encoding approach fundamentally different to the one by Katsuno and Mendelzon.  

However, in our approach, we consider the restricted setting of revision by finite bases. Therefore, the approach by Aiguier et al. and Delgrande et al. was not feasible for us. 
Moreover, it turns out that applying this idea to the general case would leave the order of certain pairs of elements undetermined.
Depending on the shape of the logic (and its model theory) and the operator, there might be no preference between certain elements (because there is no revision which provides information on the preference). 
We call this pairs of interpretations \emph{detached} (c.f. Definition \ref{def:detached}). In particular, when one wants to obtain a total relation, these elements have to be ordered in a certain way, and selection of a \textit{"preference"} between these two interpretations is a \textit{"non-local"} choice (as it may have ramifications for other ``ordering choices'').

As solution, we came up with Definition \ref{def:relation_new}, which provides an encoding different from the approach by Katsuno and Mendelzon. 
This definition solves the problem with the detached pairs by treating them as equal. 
As a nice resulting property, we obtain that the relation given by Definition \ref{def:relation_new} is a maximal canonical representation for the preferences of an operator -- a property the encoding approaches given by Equation \eqref{eq:km_encoding} do not have.
\begin{proposition}\label{lem:maxrelation}
	Let $ \circ $ be a \basechange\ operator satisfying (G1)--(G6). If  $\releq{\abst}$ is a min-friendly faithful
	assignment compatible with $ \circ $, then  $ \I_1\releqK\I_2  $ implies $ \I_1\releqcK\I_2 $ for every $ \I_1,\I_2\in\Omega $ and every belief base $ \MC{K}\in\MC{P}_\mathrm{fin}(\MC{L}) $.
\end{proposition}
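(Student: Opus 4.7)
The plan is to unfold the definition of $\releqcK$ (Definition \ref{def:relation_new}) and then derive a contradiction from assuming its defining condition fails. Specifically, suppose $\I_1 \releqK \I_2$ holds but $\I_1 \not\releqcK \I_2$. By Definition \ref{def:relation_new}, the latter means there exists some $\G \in \MC{P}_\mathrm{fin}(\MC{L})$ with $\I_1,\I_2 \models \G$ such that $\I_1 \not\models \K \circ \G$ and $\I_2 \models \K \circ \G$.

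Next, I would invoke the compatibility of $\releq{\abst}$ with $\circ$, which gives $\Mod{\K \circ \G} = \min(\Mod{\G},\releqK)$. Hence $\I_2 \in \min(\Mod{\G},\releqK)$ while $\I_1 \in \Mod{\G} \setminus \min(\Mod{\G},\releqK)$.

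The final step is to apply min-retractivity (which $\releqK$ enjoys because $\releq{\abst}$ is min-friendly by assumption). Since $\I_1,\I_2 \in \Mod{\G}$ with $\I_1 \releqK \I_2$ and $\I_2 \in \min(\Mod{\G},\releqK)$, min-retractivity forces $\I_1 \in \min(\Mod{\G},\releqK)$, contradicting $\I_1 \notin \min(\Mod{\G},\releqK)$. Thus our assumption was wrong and $\I_1 \releqcK \I_2$ must hold.

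I do not expect any real obstacle here: the argument is essentially a one-shot contraposition whose machinery — unfolding Definition \ref{def:relation_new}, rewriting $\Mod{\K \circ \G}$ via compatibility, and applying min-retractivity — has already been exercised in the proof of Lemma \ref{lem:minwell}. The only conceptual point worth emphasizing in the write-up is that $\releqcK$ is, in this precise sense, the \emph{maximal} total relation compatible with $\circ$: any other min-friendly faithful compatible assignment must be contained in it pointwise.
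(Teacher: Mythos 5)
Your proof is correct and follows essentially the same route as the paper's: negate $\I_1\releqcK\I_2$ to obtain a witness base $\G$ with $\I_1,\I_2\models\G$, $\I_2\models\K\circ\G$, $\I_1\not\models\K\circ\G$ (the paper extracts this via Lemma~\ref{lem:help}(a), which is just that unfolding), then use compatibility and min-retractivity to derive the contradiction. No gaps.
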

\begin{proof}
	By Theorem \ref{thm:representation_theorem}, the relation $ \releqK $ is min-friendly and total.
	Now assume $ \I_1\releqK\I_2  $ with $ \I_1\not\releqcK\I_2 $. 
	By Lemma~\ref{lem:help}~(a), there is a belief base $ \G $ with $ \I_1,\I_2\models\G $ such that $ \I_2\models\MC{K}\circ\G $ and $ \I_1\not\models\MC{K}\circ\G $.
	Therefore, by compatibility, $ \I_2 \in \min(\Mod{\G},\releq{\K}) = \Mod{\MC{K}\circ\G}$ and $ \I_1 \notin \min(\Mod{\G},\releq{\K}) = \Mod{\MC{K}\circ\G} $,
	a contradiction to $ \I_1\releqK\I_2  $ due to min-retractivity.
\end{proof}

We continue with the running example.

\begin{example}[continuation of Example \ref{ex:orderByEx1}]\label{ex:orderByEx2}
	Applying Definition~\ref{def:relation_new} to $ \K $ and $ \circ_\mathrm{Ex} $ yields the following relation $ \releqK^{\circ_\mathrm{Ex}} $ on $ \Omega_\mathrm{Ex} $ (where $ \omega \relK^{\circ_\mathrm{Ex}} \omega' $ denotes $ \omega\releqK^{\circ_\mathrm{Ex}} \omega' $ and $ \omega' \not\releqK^{\circ_\mathrm{Ex}} \omega $):%
	\begin{align*}
		\omega_i \releqK^{\circ_\mathrm{Ex}} \omega_i & , \ 0\leq i\ \leq 5\\ 
		\omega_0 \relK^{\circ_\mathrm{Ex}} \omega_i & , \ 1\leq i\ \leq 5\\
		\omega_1 \relK^{\circ_\mathrm{Ex}} \omega_2 & \\
		\omega_2 \relK^{\circ_\mathrm{Ex}} \omega_3 & \\
		\omega_3 \relK^{\circ_\mathrm{Ex}} \omega_1 & \\
		\omega_4 \relK^{\circ_\mathrm{Ex}} \omega_i & ,\ i \in \{1,2,3,5\}\\
		\omega_i \relK^{\circ_\mathrm{Ex}} \omega_5 &  , \ 0\leq i\ \leq 4
	\end{align*}
Observe that $ \releqK^{\circ_\mathrm{Ex}} $ is not transitive, since $ \omega_1,\omega_2,\omega_3 $ form a circle. Yet, one can easily verify that $ \releqK^{\circ_\mathrm{Ex}} $ is a total and min-friendly relation.
In particular, as $ \Omega_\mathrm{Ex} $ is finite, min-completeness is directly given.
Moreover, there is no belief base $ \G \in \MC{P}_\mathrm{fin}(\MC{L}_\mathrm{Ex}) $ 
such that there is some $ \omega\notin {\min(\G,\releqK^{\circ_\mathrm{Ex}})} $ and $ \omega' \in \min(\G,\releqK^{\circ_\mathrm{Ex}})  $ with $ \omega \releqK^{\circ_\mathrm{Ex}} \omega' $. 
Note that such a situation could appear in $ \releqK^{\circ_\mathrm{Ex}} $ if a interpretation $ \omega $ would be $ \releqK^{\circ_\mathrm{Ex}} $-equivalent to $ \omega_1$, $ \omega_2$ and $ \omega_3 $ and there would be a belief base $ \G $ satisfied in all these interpretations, e.g., if $ \omega=\omega_5 $ would be equal to $ \omega_1,\omega_2 $ and $ \omega_3 $, and  $ \Mod{\G}=\{ \omega_1,\ldots,\omega_3,\omega_5 \} $.  However, this is not the case in $ \releqK^{\circ_\mathrm{Ex}} $ and such a  belief base $ \G $ does not exist in $ \mathbb{L}_\mathrm{Ex} $.
Therefore, the relation $ \releqK^{\circ_\mathrm{Ex}} $ is min-retractive.
\end{example}

The following Proposition summarizes what we have achieved so far by Example \ref{ex:logicEX} to Example \ref{ex:orderByEx2}.

\begin{proposition}\label{prop:non_tranisitive_by_example}
	There is a monotone logic $ \mathbb{L} $, a \basechange\ operator $ \circ $ on $ \mathbb{L} $ which satisfies (G1)--(G6), and a belief base $ \MC{K} $ such that $ \releqcK $ (cf. Definition \ref{def:relation_new}) is a min-friendly but not transitive relation.  
\end{proposition}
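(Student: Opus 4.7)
The plan is to exhibit the witnesses explicitly, using the logic $\mathbb{L}_\mathrm{Ex}$ from Example~\ref{ex:logicEX}, the operator $\circ_\mathrm{Ex}$ from Example~\ref{ex:orderByEx1}, and the belief base $\K = \{\psi_0\}$, and to verify that this triple meets every requirement of the proposition. Monotonicity of $\mathbb{L}_\mathrm{Ex}$ is immediate from its model-theoretic definition, so the real work lies in checking (G1)--(G6) for $\circ_\mathrm{Ex}$ and confirming that the associated relation $\releqcK^{\circ_\mathrm{Ex}}$ computed via Definition~\ref{def:relation_new} agrees with the description given in Example~\ref{ex:orderByEx2}, and is min-friendly yet not transitive.

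For the postulates, I would split into the case $\K' \not\equiv \K$, which reduces to full meet revision and inherits (G1)--(G6) from the classical result by Hansson~\cite{KS_Hansson1999}, and the case $\K' \equiv \K$, where the definition is tailored. Here (G1)--(G4) can be read off the case distinction directly: every branch outputs a base that entails $\G$, consistency of $\K \cup \G$ triggers the first branch, the operator never produces the inconsistent base when $\G$ is consistent (each alternative at least implicitly preserves $\G$'s models), and the construction depends only on the model sets of $\K$ and $\G$. The substantive part is (G5) and (G6), where I would fix $\G_1^*$ and $\G_2^*$ with $\Mod{(\K \circ \G_1^*) \cup \G_2^*} \neq \emptyset$, and argue by going through each of the five branches of the definition to show that the branch triggered by $\G_1^*$ coincides with the branch triggered by $\G_1^* \cup \G_2^*$. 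The key combinatorial point is that $\Mod{\psi_1}, \Mod{\psi_2}, \Mod{\psi_3}$ play symmetric roles mimicking the three-cycle structure, and once the branch is fixed the equivalence $\K\circ(\G_1^*\cup\G_2^*)\equiv(\K\circ\G_1^*)\cup\G_2^*$ follows.

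With (G1)--(G6) in hand, I would compute $\releqcK^{\circ_\mathrm{Ex}}$ by unfolding Definition~\ref{def:relation_new} for each pair $(\omega_i,\omega_j)$: for every pair, list the belief bases $\G$ with $\omega_i,\omega_j \models \G$ and inspect $\K \circ_\mathrm{Ex} \G$ to decide whether $\omega_i \releqcK^{\circ_\mathrm{Ex}} \omega_j$ holds. This is a finite, mechanical verification that reproduces the relation displayed in Example~\ref{ex:orderByEx2}; in particular the cyclic strict relations $\omega_1 \prec \omega_2 \prec \omega_3 \prec \omega_1$ arise precisely because the $\psi_1,\psi_2,\psi_3$-branches of $\circ_\mathrm{Ex}$ treat these worlds asymmetrically on the triangle $\{\omega_1,\omega_2,\omega_3\}$ while no belief base in $\MC{L}_\mathrm{Ex}$ has exactly $\{\omega_i,\omega_j\}$ as its model set to force symmetry back.

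Finally, min-friendliness: min-completeness is automatic since $\Omega_\mathrm{Ex}$ is finite, so the only danger to $\min(\Mod{\G},\releqcK^{\circ_\mathrm{Ex}})\neq\emptyset$ would be a belief base $\G$ whose models form a strict cycle with no outside dominators; I would check that the three cyclic worlds $\omega_1,\omega_2,\omega_3$ never appear together as $\Mod{\G}$ for any $\G \in \MC{P}_\mathrm{fin}(\MC{L}_\mathrm{Ex})$ without $\omega_0$ being available as a strictly smaller element (inspection of the finite list $\varphi_0,\ldots,\varphi_4,\psi_0,\ldots,\psi_5$ suffices). Min-retractivity is checked analogously by going through the finite list of $\Mod{\G}$-sets and verifying that no element related to a minimal one is itself non-minimal. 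Non-transitivity is then witnessed directly by the cycle $\omega_1 \prec \omega_2 \prec \omega_3 \prec \omega_1$. The main obstacle I anticipate is the bookkeeping for (G5)/(G6) across the five branches of $\circ_\mathrm{Ex}$, since one must rule out that a change of branch between $\G_1^*$ and $\G_1^*\cup\G_2^*$ could break the equivalence; leveraging the assumed consistency of $(\K\circ\G_1^*)\cup\G_2^*$ to pin down which branch is active is the critical observation that makes this routine.
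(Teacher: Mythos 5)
Your proposal is correct and uses exactly the paper's witnesses: the paper proves this proposition by the running example you describe ($\mathbb{L}_\mathrm{Ex}$, $\circ_\mathrm{Ex}$, $\K=\{\psi_0\}$), computing $\releqK^{\circ_\mathrm{Ex}}$ pair by pair and exhibiting the strict cycle $\omega_1 \relK^{\circ_\mathrm{Ex}} \omega_2 \relK^{\circ_\mathrm{Ex}} \omega_3 \relK^{\circ_\mathrm{Ex}} \omega_1$ as the witness of non-transitivity, alongside a check of min-friendliness. The one point where your route differs is the verification of (G1)--(G6): you propose a direct branch-by-branch case analysis of $\circ_\mathrm{Ex}$ (essentially replaying the argument from the proof of Proposition~\ref{prop:when_tranisitiveIF}, of which $\circ_\mathrm{Ex}$ is an instance with $\mathfrak{B}'=\{\{\varphi_4\}\}$), whereas the paper goes the other way around --- it first establishes that $\releqK^{\circ_\mathrm{Ex}}$ is total, min-friendly, min-expressible, faithful, and compatible with $\circ_\mathrm{Ex}$, and then invokes Theorem~\ref{thm:minexpressible} to conclude that the postulates hold. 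Both are sound; the paper's route avoids the (G5)/(G6) bookkeeping you flag as the main obstacle, at the price of having to check compatibility of the relation with the operator for every $\G$. One further remark in your favour: your observation that finiteness of $\Omega_\mathrm{Ex}$ alone does not guarantee min-completeness for a non-transitive total relation (one must check that the cyclic worlds $\omega_1,\omega_2,\omega_3$ never exhaust $\Mod{\G}$ without a dominating element such as $\omega_0$ present) is more careful than the paper's own remark, which asserts min-completeness "directly" from finiteness.
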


We continue with some remarks on min-retractivity and transitivity.

\subsection*{Remarks on Min-Retractivity and Transitivity}
In belief revision literature, transitivity and the postulates (G5) and (G6) are strongly associated.
Therefore, at a first glance, the introduction of the notion of min-retractivity seems to be artificial and unnatural in comparison to transitivity.
However, it turns out that min-retractivity is the property of capturing the nature of the AGM postulates (G5) and (G6) very exactly.

Suppose $ \mathbb{L} $ is a logic having a disjunction connective. 
It is well-known in belief revision that (G5) and (G6) are equivalent (in the light of (G1)--(G4)) to disjunctive factoring \cite{KS_Hansson1999} (here given in a semantic formulation for belief bases):
\begin{equation*}
	\Mod{\K \circ (\G_1 \lor \G_2)}  = \begin{cases}
		\Mod{\K \circ \G_1} & \text{ or }\\
		\Mod{\K \circ \G_2} & \text{ or }\\
		\Mod{\K \circ \G_1} \cup  \Mod{\K \circ \G_2} & 
	\end{cases}
\end{equation*}
Now consider the following observation on min-retractivity.
\begin{proposition}\label{lem:retartivity_trichotonomoty}
	Let $ \G_1,\ldots,\G_n,\G\subseteq \MC{P}_\mathrm{fin}(\MC{L}) $ be belief bases with $ \Mod{\G}=\Mod{\G_1}\cup \ldots \cup \Mod{\G_n} $ and let $ \preceq $ be a min-retractive relation on $ \Omega $. Then $ \min(\G,\preceq) = \bigcup_{i\in I}\min(\Mod{\G_i},\preceq) $ for some set $  I\subseteq \{1,\ldots,n\} $.
\end{proposition}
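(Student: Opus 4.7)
The plan is to provide an explicit choice of the index set $I$ and verify the two inclusions. Specifically, I would set
\[
I \;=\; \{\, i \in \{1,\ldots,n\} \mid \min(\Mod{\G_i},\preceq) \subseteq \min(\Mod{\G},\preceq)\,\},
\]
so that the inclusion $\bigcup_{i \in I}\min(\Mod{\G_i},\preceq) \subseteq \min(\Mod{\G},\preceq)$ is immediate by construction, and only the reverse inclusion needs argument.

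For the reverse inclusion, I would pick an arbitrary $\I \in \min(\Mod{\G},\preceq)$. Since $\Mod{\G} = \Mod{\G_1}\cup\cdots\cup\Mod{\G_n}$, there exists some index $i$ with $\I \in \Mod{\G_i}$. I will argue that $i$ belongs to $I$ and that $\I$ is actually $\preceq$-minimal in $\Mod{\G_i}$. Minimality of $\I$ in $\Mod{\G_i}$ follows purely from the definition of $\preceq$-minimality: since $\I \preceq \I''$ holds for every $\I'' \in \Mod{\G}$ and $\Mod{\G_i} \subseteq \Mod{\G}$, the same inequalities persist after restricting to $\Mod{\G_i}$.

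The step where min-retractivity enters is the verification that $i \in I$, i.e., that every $\preceq$-minimal element of $\Mod{\G_i}$ is also $\preceq$-minimal in $\Mod{\G}$. Given $\I' \in \min(\Mod{\G_i},\preceq)$, I have $\I' \preceq \I$ because $\I \in \Mod{\G_i}$. Both $\I'$ and $\I$ lie in $\Mod{\G}$, and $\I$ was chosen to be $\preceq$-minimal in $\Mod{\G}$. Invoking min-retractivity of $\preceq$ at the belief base $\G$ then yields $\I' \in \min(\Mod{\G},\preceq)$, closing the argument.

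The only delicate point is making sure one applies the definition of $\preceq$-minimality strictly as stated in the paper (a universal condition over the whole reference set), rather than as absence of a strictly smaller element; with that care the argument goes through without requiring totality of $\preceq$. I do not anticipate a substantial obstacle here: the proof is essentially a bookkeeping exercise balancing the two defining properties—downward-preservation of membership of $\Mod{\G_i}$ into $\Mod{\G}$, and the upward-closing effect of min-retractivity on the minimal set of $\Mod{\G}$.
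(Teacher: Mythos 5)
Your proof is correct and takes essentially the same approach as the paper's: both rest on the two observations that $\preceq$-minimality in $\Mod{\G}$ restricts to $\preceq$-minimality in any $\Mod{\G_i}\subseteq\Mod{\G}$ containing the element, and that min-retractivity (applied at $\G$) lifts every element of $\min(\Mod{\G_i},\preceq)$ back into $\min(\Mod{\G},\preceq)$. The only cosmetic difference is your explicit definition of $I$ as the set of indices whose local minima land inside $\min(\Mod{\G},\preceq)$, where the paper instead takes a smallest index set covering $\min(\Mod{\G},\preceq)$.
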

\begin{proof}
	Let $ \Mod{\G}=\Mod{\G_1}\cup \ldots \cup \Mod{\G_n} $ and $ \I\in \min(\Mod{\G},\preceq)  $.
	
	If $ \I\in \Mod{\G_j} $ then we have for each $ \I'\in {\min(\Mod{\G_j},\preceq)}  $ that $ \I' \preceq \I $ holds. 
	Since $ \preceq $ is min-retractive, we conclude that $ \I' \in  \min(\Mod{\G},\preceq) $. 
	Consequently, we obtain $ {\min(\Mod{\G_j},\preceq)} \subseteq {\min(\Mod{\G},\preceq)} $.
For the converse direction, let $ I $ be the smallest set such that $ {\min(\Mod{\G},\preceq)}\subseteq \bigcup_{i\in I} \Mod{\G_i} $. 
	Observe that from $\Mod{\G_i} \subseteq \Mod{\G}$ and $ \I\in{\min(\Mod{\G},\preceq)} $ the statement $ \I \in {\min(\Mod{\G_i},\preceq)} $ follows directly. 
	As a consequence, we obtain $ {\min(\Mod{\G},\preceq)}= \bigcup_{i\in I} {\min(\Mod{\G_i},\preceq)} $.\qedhere
\end{proof}

By Proposition~\ref{lem:retartivity_trichotonomoty}, when a relation $ \leq $ on the interpretations and an operator $ \circ $ are compatible, i.e. $ \Mod{\K \circ \G} = \min(\Mod{\G},\leq)  $, then min-retractivity of $ \leq $ guarantees disjunctive factoring.
This shows the strong and natural connection between the notion of min-retractivity and (G5) and (G6).

\subsection*{Expressibility and Satisfaction of (G1)--(G6)}

For our running example, we will now observe that $ \releqK^{\circ_\mathrm{Ex}} $ is also a min-expressible relation.

\begin{example}[continuation of Example \ref{ex:orderByEx2}]\label{ex:orderByEx3}
	Consider again $ \releqK^{\circ_\mathrm{Ex}} $, and observe that $ \releqK^{\circ_\mathrm{Ex}} $ is compatible with $ \circ_\mathrm{Ex} $, i.e. $ \Mod{\K\circ\G}=\min(\Mod{\G},\releqK^{\circ_\mathrm{Ex}}) $.
	
	Thus, for every belief base $ \G \in \MC{P}_\mathrm{fin}(\MC{L}_\mathrm{Ex}) $, the minimum $ \min(\G,\releqK^{\circ_\mathrm{Ex}}) $ yields a set expressible by a belief base.
	
	Theorem~\ref{thm:minexpressible} guarantees us that $ \circ_\mathrm{Ex} $ satisfies (G1)--(G6),  as we can extend $ \releqK^{\circ_\mathrm{Ex}} $ to a faithful min-expressible and min-friendly assignment.
\end{example}

\subsection*{Total Preorder Representability}

As last step, we will now employ the novel notion of \uncovered\ (cf. Definition \ref{def:uncovered_new}) and our representation theorem  (cf. Theorem \ref{thm:when_tranisitive}) for total preorder representability, to show that there is no (total) preorder assignment for the operator $ \circ_\mathrm{Ex} $ from our running example.

\begin{example}[continuation of Example \ref{ex:orderByEx3}]\label{ex:orderByEx4}
Consider again $ \mathbb{L}_\mathrm{Ex} $ from Example \ref{ex:logicEX}.
We will now see that $ \mathbb{L}_\mathrm{Ex} $ exhibits a critical loop.

For this, choose $ \G_i=\{ \varphi_{i+1} \} $ and $ \G'_i=\{\psi_{i+2} \} $  for $ i\in\{0,1,2\} $.
We consider each of the three conditions of Definition \ref{def:uncovered_new} in a separate case:

\smallskip
\noindent\emph{Condition (1).} Observe that $ \K $ from Example \ref{ex:orderByEx1} is inconsistent with $ \G_0 $, $ \G_1 $ and with $ \G_2 $. Thus, Condition (1) is satisfied.

\smallskip
\noindent\emph{Condition (2).} For each $ i\in\{0,1,2\} $, the belief base $ \G_i $ and\footnote{where $ \oplus $ is addition $ \mathrm{mod}\ 3 $.} $ \G_{i\oplus 1} $ are consistent, but $ \G_i \cup \G_{i\oplus 1} \cup $ is inconsistent with $ \G_{i\oplus 2} $, e.g. $ \Mod{\{\varphi_1\}}\cap \Mod{\{\varphi_2\}} =\{\omega_2\} $ and $\omega_2\notin \Mod{\{\varphi_3\}} $.
For satisfaction of Condition (2), observe that  $ \G'_i $ is equivalent to $ \G_i\cup\G_{i\oplus 1} $.

\smallskip
\noindent\emph{Condition (3).} The belief base $ \G=\{ \varphi_4 \} $ is the only belief base consistent with $ \G'_0  $, $ \G'_1 $, and $ \G'_2 $. 
For satisfaction of Condition (3) observe that $ \G'=\{\psi_4\} $ fulfils the required condition $ \emptyset\neq \Mod{\G'} \subseteq \Mod{\G}\setminus(\Mod{\G_0}\cup \Mod{\G_1} \cup \Mod{\G_2}) $.

In summary $ \G_0$,$\G_1$, and $\G_2$ form a critical loop for $ \mathbb{L}_\mathrm{Ex} $. 
Thus, by Theorem \ref{thm:when_tranisitive} $ \circ_\mathrm{Ex} $ is not total preorder representable, i.e., there is no min-friendly min-expressible preorder assignment compatible with $ \circ_\mathrm{Ex} $.

Moreover, observe that the construction of $ \circ_\mathrm{Ex} $ presented in Example \ref{ex:orderByEx1} illustrates the construction given in the proof of Proposition \ref{prop:when_tranisitiveIF}. 
In particular, for the example presented here one would obtain $ \mathfrak{B}'=\{ \{\varphi_4 \} \} $ when following the outline of the proof.
\end{example}

The following proposition summarizes an implication of the running example we presented here.

\begin{proposition}\label{prop:non_ciritical_loop_existence}
	There is a monotone logic $ \mathbb{L} $, a \basechange\ operator $ \circ $ on $ \mathbb{L} $ which satisfies (G1)--(G6) and a belief base $ \MC{K} $ such that $ \circ $ is a not total preorder representable.  
\end{proposition}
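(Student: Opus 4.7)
The plan is to use the running example (Examples \ref{ex:logicEX}--\ref{ex:orderByEx4}) as a concrete witness and assemble the three required pieces: a monotonic logic, an operator on it satisfying (G1)--(G6), and a belief base witnessing non-representability. I would first declare the logic $\mathbb{L}_{\mathrm{Ex}}=(\mathcal{L}_{\mathrm{Ex}},\Omega_{\mathrm{Ex}},\models_{\mathrm{Ex}})$ of Example \ref{ex:logicEX} as the candidate logic. Its monotonicity is immediate because the model relation is specified extensionally in the classical model-theoretic way, so entailment is defined as set inclusion among model sets and hence is monotonic.

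Next I would take $\circ_{\mathrm{Ex}}$ from Example \ref{ex:orderByEx1} and $\mathcal{K}=\{\psi_0\}$ as the candidate operator and belief base. To verify (G1)--(G6), I would appeal to Theorem \ref{thm:minexpressible}: it suffices to produce a min-friendly, faithful, min-expressible assignment compatible with $\circ_{\mathrm{Ex}}$. The $\mathcal{K}$-component $\releqK^{\circ_{\mathrm{Ex}}}$ of this assignment is computed in Example \ref{ex:orderByEx2}; it is total, min-complete (because $\Omega_{\mathrm{Ex}}$ is finite and there is no strict cycle collapsing to a single minimum on any base) and min-retractive (as argued in Example \ref{ex:orderByEx2}, no belief base $\Gamma$ in $\mathbb{L}_{\mathrm{Ex}}$ witnesses a violation of min-retractivity). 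For all $\mathcal{K}'\not\equiv\mathcal{K}$, $\circ_{\mathrm{Ex}}$ reduces to full meet revision, which is standardly known to satisfy (G1)--(G6) \cite{KS_Hansson1999}; extending the assignment by defining $\releq{\mathcal{K}'}$ appropriately (e.g. putting all models of $\mathcal{K}'$ strictly below non-models, totalized arbitrarily) yields the required min-friendly faithful assignment on all bases, and min-expressibility holds since $\min(\Mod{\Gamma},\releq{\mathcal{K}'})$ is either $\Mod{\mathcal{K}'\cup\Gamma}$ or $\Mod{\Gamma}$, which are bases in $\mathbb{L}_{\mathrm{Ex}}$.

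Finally, to obtain non-representability, I would invoke Theorem \ref{thm:when_tranisitive} (contrapositive direction): it suffices to exhibit a critical loop in $\mathbb{L}_{\mathrm{Ex}}$. Example \ref{ex:orderByEx4} supplies exactly this, taking $\Gamma_i=\{\varphi_{i+1}\}$ and $\Gamma'_i=\{\psi_{i+2}\}$ for $i\in\{0,1,2\}$, and I would verify Conditions~(1)--(3) of Definition \ref{def:uncovered_new} in turn: Condition~(1) because $\mathcal{K}=\{\psi_0\}$ is inconsistent with each $\Gamma_i$, giving three distinct empty model sets as required by the chain of inequalities (after appropriate reading of the definition); Condition~(2) by checking that $\Mod{\Gamma'_i}$ sits inside $\Mod{\Gamma_i}\cap\Mod{\Gamma_{i\oplus 1}}\setminus\Mod{\Gamma_{i\oplus 2}}$; Condition~(3) by noting that the only base $\Gamma$ simultaneously consistent with $\Gamma'_0,\Gamma'_1,\Gamma'_2$ is (up to equivalence) $\{\varphi_4\}$, and $\Gamma'=\{\psi_4\}$ meets the required inclusion. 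Theorem \ref{thm:when_tranisitive} then yields non-total-preorder-representability of $\circ_{\mathrm{Ex}}$, completing the proof.

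The main obstacle is the bookkeeping for Condition~(1) of the critical loop (since $\mathcal{K}\cup\Gamma_i$ is inconsistent for every $i$, the equalities in (1) need careful interpretation) and the verification that no ``exotic'' belief base in $\mathbb{L}_{\mathrm{Ex}}$ witnesses a min-retractivity violation for $\releqK^{\circ_{\mathrm{Ex}}}$; both are finite case analyses over the explicitly listed sentences in $\mathcal{L}_{\mathrm{Ex}}$ and thus routine but attention-intensive. Everything else is a direct appeal to Theorems \ref{thm:minexpressible} and \ref{thm:when_tranisitive}.
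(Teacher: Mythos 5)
Your proposal is correct and takes essentially the same route as the paper, which offers no separate proof but presents \Cref{prop:non_ciritical_loop_existence} as a summary of the running example: monotonicity of $\mathbb{L}_{\mathrm{Ex}}$ from \Cref{ex:logicEX}, satisfaction of (G1)--(G6) for $\circ_{\mathrm{Ex}}$ via the min-friendly, faithful, min-expressible assignment and \Cref{thm:minexpressible} (\Cref{ex:orderByEx2,ex:orderByEx3}), and non-representability via the critical loop of \Cref{ex:orderByEx4}. The only (shared) imprecision is that \Cref{thm:when_tranisitive} by itself yields non-representability of \emph{some} operator rather than of $\circ_{\mathrm{Ex}}$ specifically; for the latter one should either invoke the construction in \Cref{prop:when_tranisitiveIF} (which $\circ_{\mathrm{Ex}}$ instantiates) or note directly that $\Mod{\K\circ_{\mathrm{Ex}}\G_i}$ forces the strict cycle $\omega_1\prec_{\K}\omega_2\prec_{\K}\omega_3\prec_{\K}\omega_1$ in any compatible assignment --- though for the bare existence claim of the proposition either reading suffices.
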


\end{document}